\newtheorem{prop}{Proposition}
 \newcommand{\w}{{\boldsymbol \theta}}
 \newcommand{\tr}{\top}     
 \newcommand{\x}{{\boldsymbol \phi}}
 \newcommand{\bs}{\boldsymbol}
\begin{document}
\title{True Online Temporal-Difference Learning}

\author{\name{Harm van Seijen$^{\dagger\ddagger}$} \email{harm.vanseijen@maluuba.com}\\
       \name{A. Rupam Mahmood$^\dagger$} \email{ashique@ualberta.ca}\\
       \name{Patrick M. Pilarski$^\dagger$} \email{patrick.pilarski@ualberta.ca}\\
       \name{Marlos C. Machado$^\dagger$} \email{machado@ualberta.ca}\\
       \name{Richard S. Sutton$^\dagger$} \email{sutton@cs.ualberta.ca} \\
       \\
       \addr $^\dagger$Reinforcement Learning and Artificial Intelligence Laboratory\\
       University of Alberta\\
       2-21 Athabasca Hall, Edmonton, AB\\
       Canada, T6G 2E8\\
       \\
        \addr $^\ddagger$Maluuba Research\\
        2000 Peel Street, Montreal, QC\\
        Canada, H3A 2W5}

\editor{George Konidaris}

\maketitle

\begin{abstract}%
The temporal-difference methods TD($\lambda$) and Sarsa($\lambda$) form a core part of modern reinforcement learning. Their appeal comes from their good performance, low computational cost, and their simple interpretation, given by their forward view. Recently, new versions of these methods were introduced, called true online TD($\lambda$) and true online Sarsa($\lambda$), respectively \citep{vanseijen:icml14}. Algorithmically, these true online methods only make two small changes to the update rules of the regular methods, and the extra computational cost is negligible in most cases. However, they follow the ideas underlying the forward view much more closely. In particular, they maintain an exact equivalence with the forward view at all times, whereas the traditional versions only approximate it for small step-sizes. We hypothesize that these true online methods not only have better theoretical properties, but also dominate the regular methods empirically. In this article, we put this hypothesis to the test by performing an extensive empirical comparison. Specifically, we compare the performance of true online TD($\lambda$)/Sarsa($\lambda$) with  regular TD($\lambda$)/Sarsa($\lambda$) on random MRPs, a real-world myoelectric prosthetic arm, and a domain from the Arcade Learning Environment. We use linear function approximation with tabular, binary, and non-binary features. Our results suggest that the true online methods  indeed dominate  the regular methods. Across all domains/representations the learning speed of the true online methods are often better, but never worse than that of the regular methods. An additional advantage is that no choice between traces has to be made for the true online methods.
Besides the empirical results, we provide an in-depth analysis of the theory behind true online temporal-difference learning. In addition, we show that new true online temporal-difference methods can be derived by making changes to the online forward view and then rewriting the update equations.
\end{abstract}
\begin{keywords}  temporal-difference learning, eligibility traces, forward-view equivalence \end{keywords}

\section{Introduction}

Temporal-difference (TD) learning is a core learning technique in modern reinforcement learning \citep{sutton:ml88, kaelbling:jair96, sutton:book98, szepesvari:book10}. One of the main challenges in reinforcement learning is to make predictions, in an initially unknown environment, about the (discounted) sum of future rewards, the return, based on currently observed feature values and a certain behaviour policy. With TD learning it is possible to learn good estimates of the expected return quickly by bootstrapping from other expected-return estimates. TD($\lambda$) \citep{sutton:ml88} is a popular TD algorithm that combines basic TD learning with eligibility traces to further speed learning. The popularity of TD($\lambda$) can be explained by its simple implementation, its low-computational complexity and its conceptually straightforward interpretation, given by its forward view. The forward view of TD($\lambda$) states that the estimate at each time step is moved towards an update target known as the  $\lambda$-return, with $\lambda$ determining the fundamental trade-off between bias and variance of the update target. This trade-off has a large influence on the speed of learning and its optimal setting varies from domain to domain. The ability to improve this trade-off by adjusting the value of $\lambda$ is what underlies  the performance advantage of eligibility traces.

Although the forward view provides a clear intuition, TD($\lambda$) closely approximates the forward view only for appropriately small step-sizes. 
Until recently, this was considered an unfortunate, but unavoidable part of the theory behind TD($\lambda$). This changed with the introduction of true online TD($\lambda$) \citep{vanseijen:icml14}, which computes exactly the same weight vectors as the forward view at any step-size. This gives true online TD($\lambda$) full control 
over the bias-variance trade-off. In particular, true online TD(1) can achieve fully unbiased updates. Moreover, true online TD($\lambda$) only requires small modifications to the TD($\lambda$) update equations, and the extra computational cost is negligible in most cases.

We hypothesize that true online TD($\lambda$), and its control version true online Sarsa($\lambda$), not only have better theoretical properties than their regular counterparts, but also dominate them empirically. We test this hypothesis by performing an extensive empirical comparison between true online TD($\lambda$), regular TD($\lambda$) (which is based on accumulating traces), and the common variation based on replacing traces. In addition, we perform comparisons between true online Sarsa($\lambda$) and Sarsa($\lambda$) (with accumulating and replacing traces). The domains we use include random Markov reward processes, a real-world myoelectric prosthetic arm, and a domain from the Arcade Learning Environment \citep{bellemare:jair13}. The representations we consider range from tabular values to linear function approximation with binary and non-binary features. 

Besides the empirical study, we provide an in-depth discussion on the theory behind true online TD($\lambda$). This theory is based on a new online forward view. The traditional forward view, based on the $\lambda$-return, is inherently an offline forward view meaning that updates only occur at the end of an episode, because the $\lambda$-return requires data up to the end of an episode. We extend this forward view to the online case, where updates occur at every time step, by using a bounded version of the $\lambda$-return that grows over time.  Whereas TD($\lambda$) approximates the traditional forward view only at the end of an episode, we show that TD($\lambda$) approximates this new online forward view at all time steps. True online TD($\lambda$) is equivalent to this new online forward view at all time steps. We prove this by deriving the true online TD($\lambda$) update equations directly from the online forward view update equations. This derivation forms a blueprint for the derivation of other true online methods. By making variations to the online forward view and following the same derivation as for true online TD($\lambda$), we derive several other true online methods.

This article is organized as follows. We start by presenting the required background in Section 2. Then, we present the new online forward view in Section 3, followed by the presentation of true online TD($\lambda$) in Section 4. Section 5 presents the empirical study. Furthermore, in Section 6, we present several other true online methods. In Section 7, we discuss in detail related papers. Finally, Section 8 concludes.

\section{Background}
\label{background}

Here, we present the main learning framework. As a convention, we indicate scalar-valued random variables by capital letters (e.g., $S_t$, $R_t$), vectors by bold lowercase letters (e.g.,  $\w$, $\x$), functions by non-bold lowercase letters (e.g., $v$), and sets by calligraphic font (e.g., $\mathcal{S}$, $\mathcal{A}$).\footnote{An exception to this convention is the TD error, a scalar-valued random variable that we indicate by $\delta_t$.}

\subsection{Markov Decision Processes}
\label{MDPs}

Reinforcement learning (RL) problems are often formalized as \emph{Markov decision processes} (MDPs), which can be described as 5-tuples of the form $\langle \mathcal{S}, \mathcal{A}, p, r, \gamma \rangle$, where $\mathcal{S}$ indicates the set of all states; $\mathcal{A}$ indicates the set of all actions; $p(s'|s,a)$ indicates the probability of a transition to state $s' \in \mathcal{S}$, when action $a \in \mathcal{A}$ is taken in state $s \in \mathcal{S}$;  $r(s,a,s')$ indicates the expected reward for a transition from state $s$ to state $s'$ under action $a$; the discount factor $\gamma$ specifies how future rewards are weighted with respect to the immediate reward. 

Actions are taken at discrete time steps $t = 0,1,2,...$ according to a \emph{policy} $\pi: \mathcal{S} \times \mathcal{A} \rightarrow [0,1]$, which defines for each action the selection probability conditioned on the state. 
The \emph{return} at time $t$ is defined as the discounted sum of rewards, observed after $t$:
$$G_t := R_{t+1} + \gamma\,R_{t+2} + \gamma^2\,R_{t+3}+... = \sum_{i=1}^\infty\,\gamma^{i-1}\, R_{t+i}\thinspace,$$
where $R_{t+1}$ is the reward received after taking action $A_t$ in state $S_t$. Some MDPs contain special states called \emph{terminal states}. After reaching a terminal state, no further reward is obtained and no further state transitions occur. Hence, a terminal state can be interpreted as a state where each action returns to itself with a reward of 0. An interaction sequence from the initial state to a terminal state is called an \emph{episode}.

Each policy $\pi$ has a corresponding state-value function $v_{\pi}$, which maps any state $s \in \mathcal{S}$ to the expected value of the return from that state, when following policy $\pi$:
$$v_{\pi}(s) := \mathbb{E}\{ G_t \,|\, S_t = s, \pi \}\thinspace.$$
In addition, the action-value function $q_{\pi}$ gives the expected return for policy $\pi$, given that action $a \in \mathcal{A}$ is taken in state $s \in \mathcal{S}$:
$$q_{\pi}(s,a) := \mathbb{E}\{ G_t \,|\, S_t = s, A_t, = a, \pi \}\thinspace.$$
Because no further rewards can be obtained from a terminal state, the state-value and action-values for a terminal state are always 0.

There are two tasks that are typically associated with an MDP. First, there is the task of determining (an estimate of) the value function $v_\pi$ for some given policy $\pi$. The second, more challenging task is that of determining (an estimate of) the optimal policy $\pi_*$, which is defined as the policy whose corresponding value function has the highest value in each state:
$$v_{\pi_*}(s) :=  \max_\pi  \,v_\pi(s)\,,  \qquad\mbox{for each } s \in \mathcal{S}\,.$$
In RL, these two tasks are considered under the condition that the reward function $r$ and the transition-probability function $p$ are unknown. Hence, the tasks have to be solved using samples obtained from interacting  with the environment.

\subsection{Temporal-Difference Learning}

Let's consider the task of learning an estimate $V$ of the value function $v_{\pi}$ from samples, where $v_\pi$ is being estimated using linear function approximation. That is, $V$ is the inner product between a feature vector  $\x(s) \in \mathbb{R}^n$ of $s$, and a weight vector $\w \in \mathbb{R}^n$:
$$V(s, \w) =  \w^\tr \x(s)\,.$$
If $s$ is a terminal state, then by definition $\x(s) := {\bs 0}$, and hence $V(s, \w) = 0$.

We can formulate the problem of estimating $v_\pi$ as an error-minimization problem, where the error is a weighted average of the squared difference between the value of a state and its estimate:
$$E(\w) := \frac{1}{2} \sum_{i} d_\pi(s_i) \Big( v_{\pi}(s_i) - \w^\tr\x(s_i) \Big)^2\,,$$
with $d_\pi$ the stationary distribution induced by $\pi$. 
The above error function can be minimized by using stochastic gradient descent while sampling from the stationary distribution, resulting in the following update rule:
$$\w_{t+1} =  \w_t - \alpha \frac{1}{2} \nabla_\theta \Big( v_\pi (S_t) -  \w^\tr\x_t \Big)^2\,,$$
using $\x_t$ as a shorthand for $\x(S_t)$. The parameter $\alpha$ is called the \emph{step-size}. Using the chain rule, we can rewrite this update as:
\begin{eqnarray*}
\w_{t+1}&=& \w_t + \alpha \Big(  v_\pi (S_t) - \w^\tr\x_t \Big) \nabla_\theta (\w^\tr\x_t)\,,\\
&=& \w_t + \alpha \Big(  v_\pi (S_t) -  \w^\tr\x_t \Big)\x_t\,.
\end{eqnarray*}
Because $v_\pi$ is in general unknown, an estimate  $U_t$ of $v_\pi (S_t)$ is used, which we call the \emph{update target}, resulting in the following general update rule:
\begin{equation}
\w_{t+1}  =\w_t + \alpha \big( U_t -  \w^\tr\x_t \big)\x_t\,.
\label{eq:linear update}
\end{equation}

There are many different update targets possible. For an unbiased estimator the full return can be used, that is,  $U_t = G_t$. However, the full return has the disadvantage that its variance is typically very high. Hence, learning with the full return can be slow. Temporal-difference (TD) learning addresses this issue by using update targets based on other value estimates. While the update target is no longer unbiased in this case, the variance is typically much smaller, and learning much faster. TD learning uses the Bellman equations as its mathematical foundation for constructing update targets. These equations relate the value of a state to the values of its successor states:
$$v_\pi(s) =  \sum_{a} \pi(s,a)  \sum_{s'} p(s' | s,a) \big(r(s,a,s') + \gamma  \, v_\pi(s') \big)\,.$$
Writing this equation in terms of an expectation yields:
$$v_\pi(s) =  \mathbb{E}\{ R_{t+1} + \gamma v_\pi(S_{t+1}) | S_t = s\}_{\pi, p, r}\,.$$
Sampling from this expectation, while using linear function approximation  to approximate $v_\pi$, results in the update target:
$$U_t = R_{t+1} + \gamma \,\w^\tr \x_{t+1}\,.$$
This update target is called a one-step update target, because it is based on information from only one time step ahead. Applying the Bellman equation multiple times results in update targets based on information further ahead. Such update targets are called multi-step update targets.

\subsection{TD($\lambda$)}

The TD($\lambda$) algorithm implements the following update equations:
\begin{eqnarray}
\delta_t &=& R_{t+1} + \gamma \w_t^\tr \x_{t+1}   - \w_{t}^\tr \x_{t} \,,\label{eq:delta_update}\\
{\bs e}_t &=& \gamma\lambda {\bs e}_{t-1} +  \x_t  \,,\label{eq:trace_update}\\
\w_{t+1} &=&  \w_t + \alpha \delta_t\,{\bs e}_{t}  \,,\label{eq:weight_update}
\end{eqnarray}
for $t \geq 0$, and with ${\bs e}_{-1} = {\bs 0}$. The scalar $\delta_t$ is called the \emph{TD error}, and the vector ${\bs e}_t$ is called the \emph{eligibility-trace} vector.  The update of ${\bs e}_{t}$ shown above is referred to as the \emph{accumulating-trace} update. As a shorthand, we will refer to this version of TD($\lambda$) as `accumulate TD($\lambda$)', to distinguish it from a slightly different version that is discussed below. 
While these updates appear to deviate from the general, gradient-descent-based update rule given in (\ref{eq:linear update}), there is a close connection to this update rule. 
This connection is formalized through the forward view of TD($\lambda$), which we discuss in detail in the next section. Algorithm \ref{al:TD(lambda)} shows the pseudocode for accumulate TD($\lambda$).
\begin{algorithm}[thb]
\begin{algorithmic}[0]
\STATE {\bf INPUT: $\alpha, \lambda, \gamma, \w_{init}$}
\STATE $\w \leftarrow \w_{init}$
\STATE Loop (over episodes):
\STATE \qquad obtain initial $\x$
\STATE \qquad${\bs e} \leftarrow {\bs 0}$
\STATE \qquad While terminal state has not been reached, do:
\STATE \qquad\qquad obtain next feature vector $\x'$ and reward $R$
\STATE \qquad\qquad $\delta \leftarrow R + \gamma\, \w^\tr\x' -  \w^\tr\x$
\STATE \qquad\qquad $ {\bs e} \leftarrow  \gamma\lambda {\bs e}  + \x$
\STATE \qquad\qquad $\w \leftarrow  \w + \alpha \delta  {\bs e}$
\STATE \qquad\qquad $\x \leftarrow \x' $
\caption{accumulate TD($\lambda$)}
\label{al:TD(lambda)}
\end{algorithmic}
\end{algorithm}

Accumulate TD($\lambda$) can be very sensitive with respect to the $\alpha$ and $\lambda$ parameters. Especially, a large value of $\lambda$ combined with a large value of $\alpha$ can easily cause divergence, even on simple tasks with bounded rewards. For this reason, a variant of TD($\lambda$) is sometimes used that is more robust with respect to these parameters. This variant, which assumes binary features, uses a different trace-update equation:
 \begin{displaymath}
{\bs e}_{t}[i]= \begin{cases} \gamma\lambda  {\bs e}_{t-1}[i]\,,  &\mbox{if } \x_{t}[i] = 0;\\
1\,, & \mbox{if }  \x_{t}[i]= 1\,, \end{cases} 
\qquad\mbox{ for all features } i\thinspace.
\end{displaymath}
where ${\bs x}[i]$ indicates the $i$-th component of vector ${\bs x}$. This update is referred to as the \emph{replacing-trace} update. As a shorthand, we will refer to the version of TD($\lambda$) using the replacing-trace update as `replace TD($\lambda$)'.

\section{The Online Forward View}
\label{sec:online forward view}

The traditional forward view relates the TD($\lambda$) update equations to the general update rule shown in Equation (\ref{eq:linear update}). Specifically, for small step-sizes the weight vector at the end of an episode computed by accumulate TD($\lambda$) is approximately the same as the weight vector resulting from a sequence of Equation (\ref{eq:linear update}) updates (one for each visited state) using a particular multi-step update target, called the \emph{$\lambda$-return} \citep{sutton:book98, dimitri:book96}.  The $\lambda$-return for state $S_t$ is defined as:
\begin{equation}
G^{\lambda}_t := (1-\lambda) \sum_{n=1}^{T-t-1}  \lambda^{n-1} G_t^{(n)}  +  \lambda^{T-t-1} G_t\,,
\label{eq:interim lambda return1}
\end{equation}
where $T$ is the time step the terminal state is reached, and $G_t^{(n)}$ is the $n$-step return, defined as:
\begin{displaymath}
G_t^{\,(n)} := \sum_{k=1}^n \gamma^{k-1} R_{t+k} + \gamma^n\, V(S_{t+n}| \w_{t+n-1}).
\end{displaymath}

We call a method that updates the value of each visited state at the end of the episode an \emph{offline} method; we call a method that updates the value of each visited state immediately after the visit (i.e., at the time step after the visit) an \emph{online} method. TD($\lambda$) is an online method. The update sequence of the traditional forward view, however, corresponds with an offline method, because the $\lambda$-return requires data up to the end of an episode. This leaves open the question of how to interpret the weights of TD($\lambda$) \emph{during} an episode. In this section, we provide an answer to this long-standing open question. We introduce a bounded version of the $\lambda$-return that only uses information up to a certain horizon and we use this to construct an online forward view. This online forward view approximates the weight vectors of accumulate TD($\lambda$) at \emph{all} time steps, instead of only at the end of an episode.

\subsection{The Online $\lambda$-Return Algorithm}
\label{sec:online forward view equations}

The concept of an online forward view contains a paradox. On the one hand, multi-step update targets require data from time steps far beyond the time a state is visited; on the other hand, the online aspect requires that the value of a visited state is updated immediately. The solution to this paradox is to assign a sequence of update targets to each visited state. The first update target in this sequence contains data from only the next time step, the second contains data from the next two time steps, the third from the next three time steps, and so on. Now, given an initial weight vector and a sequence of visited states, a new weight vector can be constructed by updating each visited state with an update target that contains data up to the current time step. Below, we formalize this idea.

We define the \emph{interim $\lambda$-return} for state $S_k$ with horizon $h \in \mathbb{N}^+, h > k$ as follows:
\begin{equation}
G^{\lambda|h}_k := (1-\lambda) \sum_{n=1}^{h-k-1}  \lambda^{n-1} G_k^{(n)} + \lambda^{h-k-1} G_k^{(h-k)}\,.
\label{eq:interim lambda return1}
\end{equation}
Note that this update target does not use data beyond the horizon $h$. $G^{\lambda|h}_k$ implicitly defines a sequence of update targets for $S_k$:  $\{ G_k^{\lambda|k+1}, G_k^{\lambda|k+2}, G_k^{\lambda|k+3}, \dots \}$. As time increases, update targets based on data further away become available for state $S_k$. At a particular time step $t$, a new weight vector is computed by performing an Equation (\ref{eq:linear update}) update for each visited state using the interim $\lambda$-return with horizon $t$, starting from the initial weight vector $\w_{init}$. 
Hence, at time step $t$, a sequence of $t$ updates occurs. To describe this sequence mathematically, we use weight vectors with two indices: $\w_k^{\,t}$. The superscript indicates the time step at which the updates are performed (this value corresponds with the horizon of the interim $\lambda$-returns that are used in the updates).  The subscript is the iteration index of the sequence (it corresponds with the number of updates that have been performed at a particular time step). 
As an example, the update sequences for the first three time steps are:
\begin{eqnarray*}
&t = 1 : & \w_{1}^1 = \w_0^1 + \alpha \big(G_0^{\lambda|1}  - (\w_0^1)^\tr\, \x_0\big)\x_0\thinspace,\\
&&\\
&t = 2 : & \w_1^2 = \w_0^2 + \alpha \big( G_0^{\lambda|2}  -  (\w_0^2)^\tr \,\x_0\big)\x_0 \thinspace,\\
&&\w_2^2 = \w_1^2 + \alpha \big( G_1^{\lambda|2}  - (\w_1^2)^\tr \,\x_1\big)\x_1\thinspace,\\
&&\\
&t = 3 : & \w_1^3 = \w_0^3 + \alpha \big( G_0^{\lambda|3}  - (\w_0^3)^\tr \,\x_0\big)\x_0 \thinspace,\\
&&\w_2^3 = \w_1^3 + \alpha \big( G_1^{\lambda|3} - (\w_1^3)^\tr \,\x_1\big)\x_1\thinspace,\\
&&\w_3^3 = \w_2^3 + \alpha \big( G_2^{\lambda|3} - (\w_2^3)^\tr \,\x_2\big)\x_2\thinspace,
\end{eqnarray*}
with $\w_0^t := \w_{init}$ for all $t$.  
More generally, the update sequence at time step  $t$ is:
\begin{equation}
\w_{k+1}^t := \w_k^t + \alpha \Big(G_k^{\lambda|t} - (\w_k^t)^\tr \,\x_k\Big)\x_k\thinspace, \qquad\mbox{ for } 0 \leq k < t\thinspace. \label{eq:real-time update}
\end{equation}
We define $\w_t$ (without superscript) as the final weight vector of the update sequence at time $t$, that is, $\w_t := \w_t^t$. We call the algorithm implementing Equation (\ref{eq:real-time update}) the \emph{online $\lambda$-return algorithm}. By contrast, we call the algorithm that implements the traditional forward view the \emph{offline $\lambda$-return algorithm}.

The update sequence performed by the online $\lambda$-return algorithm at time step T (the time step that a terminal state is reached) is very similar to the update sequence performed by the offline $\lambda$-return algorithm. In particular, note that  $G_t^{\lambda|T}$ and  $G_t^{\lambda}$ are the same, under the assumption that the weights used for the value estimates are the same. Because these weights are in practise not exactly the same, there will typically be a small difference.\footnote{If $\lambda = 1$ there is never a difference because there is no bootstrapping.}

Figure \ref{fig:offline vs online} illustrates the difference between the online and offline $\lambda$-return algorithm, as well as accumulate TD($\lambda$), by showing the RMS error on a random walk task. The task consists of 10 states laid out in a row plus a terminal state on the left. Each state transitions with 70\% probability to its left neighbour and with 30\% probability to its right neighbour (or to itself in case of the right-most state). All rewards are 1 and $\gamma = 1$. Furthermore, $\lambda = 1$ and $\alpha = 0.2$. The right-most state is the initial state. Whereas the offline $\lambda$-return algorithm only makes updates at the end of an episode, the online $\lambda$-return algorithm, as well as accumulate TD$(\lambda$), make updates at every time step.

\begin{figure}[tb]
\begin{center}
    \includegraphics[height=5cm]{./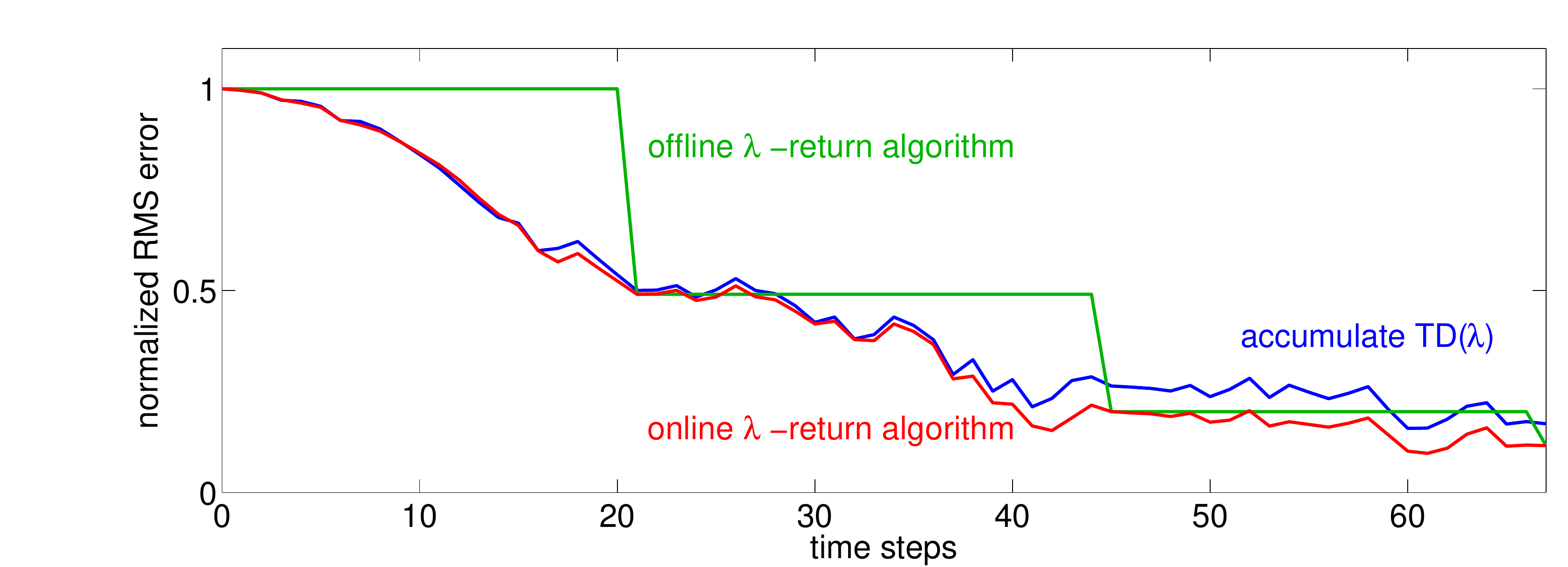}
    \caption{RMS error as function of time for the first 3 episodes of a random walk task, for $\lambda = 1$ and $\alpha = 0.2$. The error shown is the RMS error over all states, normalized by the initial RMS error.} 
    \label{fig:offline vs online}
\end{center}
\end{figure}

The comparison on the random walk task shows that accumulate TD($\lambda$) behaves similar to the online $\lambda$-return algorithm. In fact, the smaller the step-size, the smaller the difference between accumulate TD($\lambda$) and the online $\lambda$-return algorithm. This is formalized by Theorem 1. The proof of the theorem can be found in Appendix \ref{sec:proof}. The theorem uses the term $\Delta_i^t$, which is defined as:
$$\Delta_i^t := \big(\bar G_i^{\lambda|t} - \w_0^\tr \x_i \big) \x_i\,,$$ 
with $\bar G_i^{\lambda|t}$ the interim $\lambda$-return for state $S_i$ with  horizon $t$ that uses $\w_0$ for all value evaluations. Note that $\Delta_i^t$ is independent of the step-size.
\begin{theorem}
Let $\w_0$ be the initial weight vector, $\w_{t}^{td}$ be the weight vector at time $t$ computed by accumulate TD($\lambda$), and $\w_{t}^{\lambda}$ be the weight vector at time $t$ computed by the online $\lambda$-return algorithm.
Furthermore, assume that $\sum_{i=0}^{t-1} \Delta_i^t \neq {\boldsymbol 0}$. Then, for all time steps $t$:
$$\frac{|| \w_{t}^{td}  - \w_{t}^{\lambda}  ||}{|| \w_{t}^{td}  - \w_0  ||} \rightarrow 0\,, \qquad\mbox{as $\,\,\,\alpha \rightarrow 0$}.$$
\label{theorem:main}
\end{theorem}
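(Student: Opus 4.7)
The plan is to Taylor-expand both weight vectors in the step-size $\alpha$ around $\alpha = 0$, show that they agree to first order, and then use the non-degeneracy hypothesis $\sum_i \Delta_i^t \neq \bs 0$ to conclude that the denominator is $\Theta(\alpha)$ while the numerator is $O(\alpha^2)$.

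First I would observe that for $\alpha = 0$ neither algorithm changes the initial weights: every update in (\ref{eq:weight_update}) and (\ref{eq:real-time update}) is proportional to $\alpha$. Hence both $\w_t^{td}$ and $\w_t^{\lambda}$ are smooth functions of $\alpha$ with $\w_t^{td}(0) = \w_t^{\lambda}(0) = \w_0$. Writing
$$\w_t^{td} = \w_0 + \alpha \bs{u}_t^{td} + O(\alpha^2), \qquad \w_t^{\lambda} = \w_0 + \alpha \bs{u}_t^{\lambda} + O(\alpha^2),$$
the theorem reduces to checking $\bs{u}_t^{td} = \bs{u}_t^{\lambda}$ and identifying this common value with $\sum_{i=0}^{t-1} \Delta_i^t$: then the numerator is $O(\alpha^2)$, the denominator equals $\alpha \|\sum_i \Delta_i^t\| + O(\alpha^2)$, and the ratio tends to $0$.

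Next I would compute each first-order term explicitly by freezing all weight vectors appearing inside the updates at $\w_0$. For the online $\lambda$-return algorithm, substituting $\w_k^t = \w_0 + O(\alpha)$ into (\ref{eq:real-time update}) and collecting the coefficient of $\alpha$ gives
$$\bs{u}_t^{\lambda} = \sum_{k=0}^{t-1} \bigl(\bar G_k^{\lambda|t} - \w_0^\tr \x_k\bigr)\x_k = \sum_{k=0}^{t-1} \Delta_k^t,$$
since freezing weights at $\w_0$ turns $G_k^{\lambda|t}$ into $\bar G_k^{\lambda|t}$ by definition. For accumulate TD($\lambda$), the same freezing turns (\ref{eq:delta_update})--(\ref{eq:weight_update}) into
$$\bs{u}_t^{td} = \sum_{k=0}^{t-1} \bar\delta_k \,\bar{\bs e}_k, \qquad \bar\delta_k = R_{k+1} + \gamma \w_0^\tr \x_{k+1} - \w_0^\tr \x_k, \qquad \bar{\bs e}_k = \sum_{j=0}^{k}(\gamma\lambda)^{k-j}\x_j.$$
The key algebraic step is then to verify
$$\sum_{k=0}^{t-1} \bar\delta_k \,\bar{\bs e}_k \;=\; \sum_{k=0}^{t-1} \bigl(\bar G_k^{\lambda|t} - \w_0^\tr \x_k\bigr)\x_k,$$
i.e.\ the standard forward--backward equivalence, but applied with a single fixed weight vector $\w_0$ so it holds as an exact identity rather than an approximation. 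This is proved by swapping the order of summation and recognising $\bar\delta$-sums as telescoping pieces of the $n$-step returns defining $\bar G_k^{\lambda|t}$; it is essentially Sutton \& Barto's offline equivalence argument, but cleaner because no weight changes occur inside the bootstrapped estimates.

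The main obstacle I anticipate is a clean bookkeeping of the $O(\alpha^2)$ remainders. For the online $\lambda$-return algorithm the weights $\w_k^t$ are constructed by a nested sequence of updates whose later iterates depend on earlier ones and on the returns $G^{\lambda|t}_k$ that themselves contain bootstrapped values $V(S_{k+n}\mid \w_{k+n-1}^t)$; one has to check that all these inner weight perturbations contribute only $O(\alpha^2)$ to $\w_t^\lambda$. A uniform bound of the form $\|\w_k^t - \w_0\| \le C\alpha$ for $k \le t$ (provable by induction on $k$ using boundedness of features, rewards, and the finite horizon) suffices to control every second-order cross-term, and the analogous bound $\|\w_k^{td} - \w_0\| \le C\alpha$ for the TD iterates handles the backward-view side. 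Once these uniform bounds are in place, the Taylor expansions are rigorous, the first-order terms match, and the theorem follows.
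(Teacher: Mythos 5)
Your proposal is correct and follows essentially the same route as the paper's proof: both arguments expand each iterate as $\w_0 + \alpha\sum_{i=0}^{t-1}\Delta_i^t + \mathcal{O}(\alpha^2)$, with the first-order match resting on the summation swap $\sum_j\sum_{i\le j} = \sum_i\sum_{j\ge i}$ for the trace terms and the telescoping identity $G_i^{\lambda|h+1}-G_i^{\lambda|h}=(\gamma\lambda)^{h-i}\delta_h'$ for the interim $\lambda$-returns (your ``forward--backward equivalence at frozen weights''), and then the ratio becomes $\mathcal{O}(\alpha)/\bigl(\|\sum_i\Delta_i^t\|+\mathcal{O}(\alpha)\bigr)$. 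The only cosmetic difference is that the paper manipulates the exact (unfrozen) quantities first and passes to $\alpha\to 0$ at the end, whereas you freeze at $\w_0$ up front; these are the same computation.
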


Theorem \ref{theorem:main} generalizes the traditional result to arbitrary time steps. The traditional result  states that the difference between the weight vector at the end of an episode computed by the offline $\lambda$-return algorithm and the weight vector at the end of an episode computed by accumulate TD($\lambda$) goes to 0, if the step-size goes to 0 \citep{dimitri:book96}.

\subsection{Comparison to Accumulate TD($\lambda$)}

While accumulate TD($\lambda$) behaves like the online $\lambda$-return algorithm for small step-sizes, small step-sizes often result in slow learning. Hence, higher step-sizes are desirable. For higher step-sizes, however, the behaviour of accumulate TD($\lambda$) can be very different from that of the online $\lambda$-return algorithm. And as we show in the empirical section of this article (Section \ref{sec:empirical study}), when there is a difference, it is almost exclusively in favour of the online $\lambda$-return algorithm. In this section, we  analyze why the online $\lambda$-return algorithm can outperform accumulate TD($\lambda$), using the one-state example shown in the left of Figure \ref{fig:one-state example}.

\begin{figure}[tb]
\begin{center}
\includegraphics[width=4cm]{./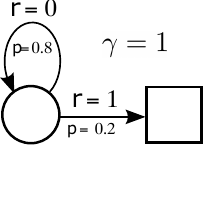}
\hspace{2cm}
\includegraphics[width=6cm]{./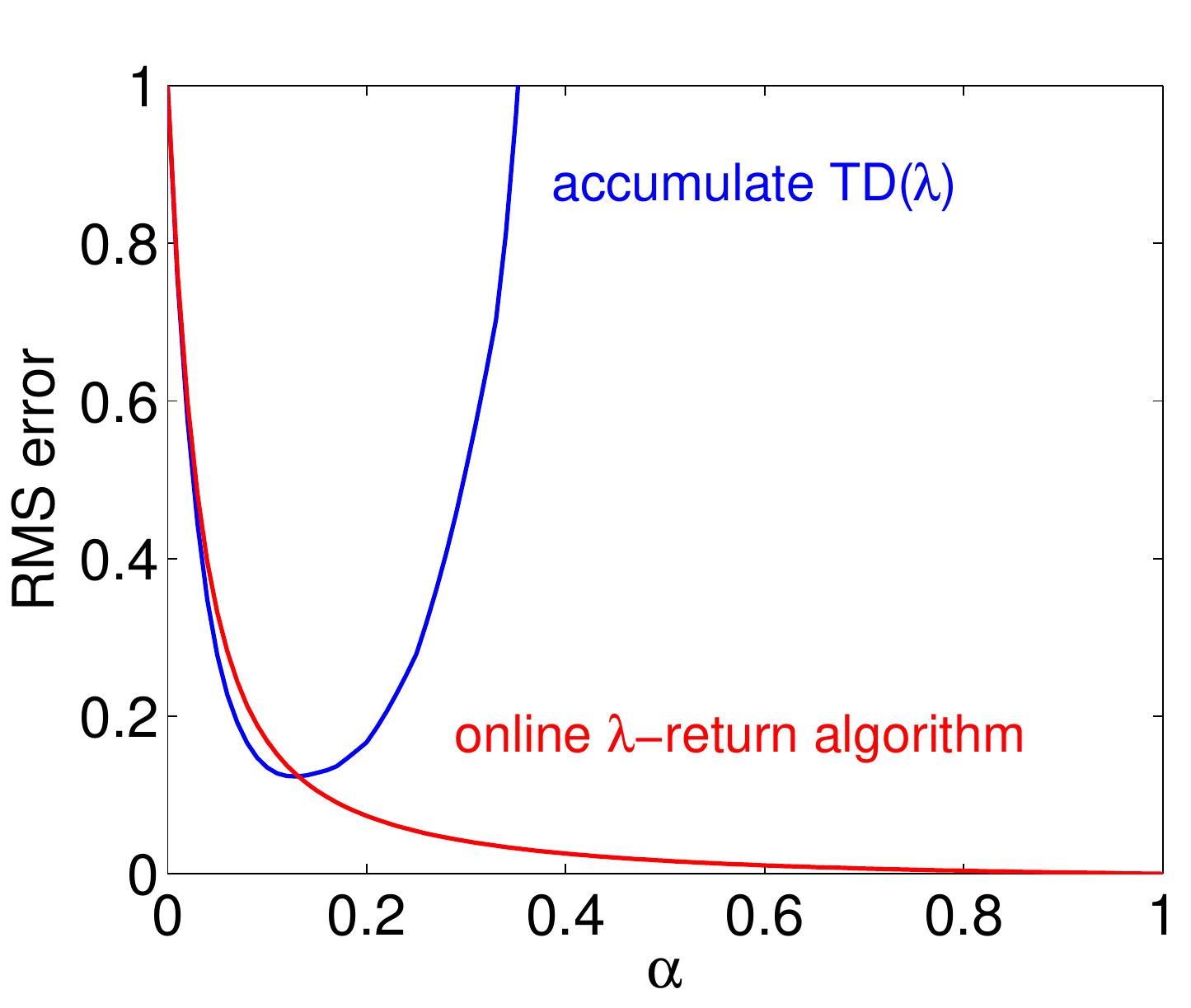}
\caption{{\it Left: } One-state example (the square indicates a terminal state). {\it Right: } The RMS error of the state value at the end of an episode, averaged over the first 10 episodes, for $\lambda = 1$.} 
\label{fig:one-state example}
\end{center}
\end{figure}

The right of Figure \ref{fig:one-state example} shows the RMS error over the first 10 episodes of the one-state example for different step-sizes and $\lambda=1$. While for small step-sizes accumulate TD($\lambda$) behaves indeed like the online $\lambda$-return algorithm---as predicted by Theorem \ref{theorem:main}---, for larger step-sizes the difference becomes huge. To understand the reason for this, we derive an analytical expression for the value at the end of an episode. 

First, we consider accumulate TD($\lambda$). Because there is only one state involved, we indicate the value of this state simply by V. The update at the end of an episode is  $V_T = V_{T-1} + \alpha e_{T-1} \delta_{T-1}$. In our example, $\delta_t = 0$ for all time steps  $t$, except for $t = {T-1}$, where  $\delta_{T-1} = 1 - V_{T-1}$. Because $\delta_t$  is 0 for all time steps except the last, $V_{T-1} = V_0$.  Furthermore, $\phi_t = 1$ for all time steps $t$, resulting in $e_{T-1}= T$. Substituting all this in the expression for $V_T$ yields:
\begin{equation}
V_T  = V_0 + T \alpha  (1  - V_0)\,,   \qquad\mbox{ for accumulate TD($\lambda$).}
\label{eq:VT acc}
\end{equation}
So for accumulate TD($\lambda$), the total value difference is simply a summation of the value difference corresponding to a single update.


Now, consider the online $\lambda$-return algorithm. The value at the end of an episode, $V_T$, is equal to $V_T^T$, resulting from the update sequence:
$$ V_{k+1}^T = V^T_k + \alpha (G_k^{\lambda|T} - V^T_k)\,, \qquad \mbox{ for } 0 \leq k < T\,.$$
By incremental substitution, we can directly express $V_T$ in terms of the initial value, $V_0$, and the update targets:
$$ V_T = (1-\alpha)^T V_0 + \alpha (1-\alpha)^{T-1} G_0^{\lambda|T}  + \alpha (1-\alpha)^{T-2} G_1^{\lambda|T}  + \cdots + \alpha G_{T-1}^{\lambda|T}\,.$$
Because $G_k^{\lambda|T} = 1$ for all $k$ in our example, the weights of all update targets can be added together and the expression can be rewritten as a single pseudo-update, yielding:
\begin{equation}
V_T  = V_0 + \big( 1 - (1-\alpha)^T\big) \cdot ( 1 - V_0)\,,  \qquad\mbox{ for the online $\lambda$-return algorithm.}
\label{eq:VT on}
\end{equation}

The term $1 - (1-\alpha)^T$ in (\ref{eq:VT on}) acts like a pseudo step-size. For larger $\alpha$ or $T$ this pseudo step-size increases in value, but as long as $\alpha \leq 1$ the value will never exceed 1. By contrast, for accumulate TD($\lambda$) the pseudo step-size is $T \alpha$, which can grow much larger than 1 even for $\alpha < 1$, causing divergence of values. This is the reason that accumulate TD($\lambda$) can be very sensitive to the step-size and it explains why the optimal step-size for accumulate TD($\lambda$) is much smaller than the optimal step-size for the online $\lambda$-return algorithm in Figure \ref{fig:one-state example} ($\alpha \approx 0.15$  versus $\alpha = 1$, respectively). Moreover, because the variance on the pseudo step-size is higher for accumulate TD($\lambda$) the performance at the optimal step-size for accumulate TD($\lambda$) is worse than the performance at the optimal step-size for the online $\lambda$-return algorithm. 

\subsection{Comparison to Replace TD($\lambda$)}

The sensitivity of accumulate TD($\lambda$) to divergence, demonstrated in the previous subsection, has been known for long. In fact, replace TD($\lambda$) was designed to deal with this. But while replace TD($\lambda$) is much more robust with respect to divergence, it also has its limitations. One obvious limitation is that it only applies to binary features, so it is not generally applicable. But even in domains where replace TD($\lambda$)  can be applied,  it  can perform poorly.  The reason is that replacing previous trace values, rather than adding to it, reduces the multi-step characteristics of TD($\lambda$). 

To illustrate this, consider the two-state example shown in the left of Figure \ref{fig:two-state example}. It is easy to see that the value of the left-most state is 2 and of the other state is 0. The state representation consists of only a single, binary feature that is 1 in both states and 0 in the terminal state. Because there is only a single feature, the state values cannot be represented exactly. The weight that minimizes the mean squared error assigns a value of 1 to both states, resulting in an RMS error of 1. Now consider the graph shown in the right of Figure \ref{fig:two-state example}, which shows the asymptotic RMS error for different values of $\lambda$. The error for accumulate TD($\lambda$) converges to the least mean squares (LMS) error for $\lambda = 1$, as predicted by the theory \citep{dayan:ml92}. The online $\lambda$-return algorithm has the same convergence behaviour (due to Theorem 1). By contrast, replace TD($\lambda$) converges to the same value as TD(0) for any value of $\lambda$. The reason for this behaviour is that because the single feature is active at all time steps, the multi-step behaviour of TD($\lambda$) is fully removed, no matter the value of $\lambda$. Hence, replace TD($\lambda$) behaves exactly the same as TD(0) for any value of $\lambda$ at all time steps. As a result, it also behaves like TD(0) asymptotically. 

The two-state example very clearly demonstrates that there is a price payed by replace TD($\lambda$) to achieve robustness with respect to divergence: a reduction in multi-step behaviour. By contrast, the online $\lambda$-return algorithm, which is also robust to divergence, does not have this disadvantage. Of course, the two-state example, as well as the one-state example from the previous section, are extreme examples, merely meant to illustrate what can go wrong. But in practise, a domain will often have some characteristics of the one-state example and some of the two-state example, which negatively impacts the performance of both accumulate and replace TD($\lambda$).
\begin{figure}[tb]
\begin{center}
\includegraphics[width=7cm]{./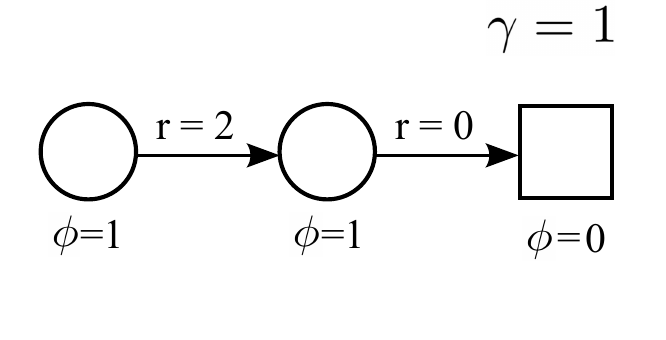}
\hspace{1cm}
\includegraphics[width=6cm]{./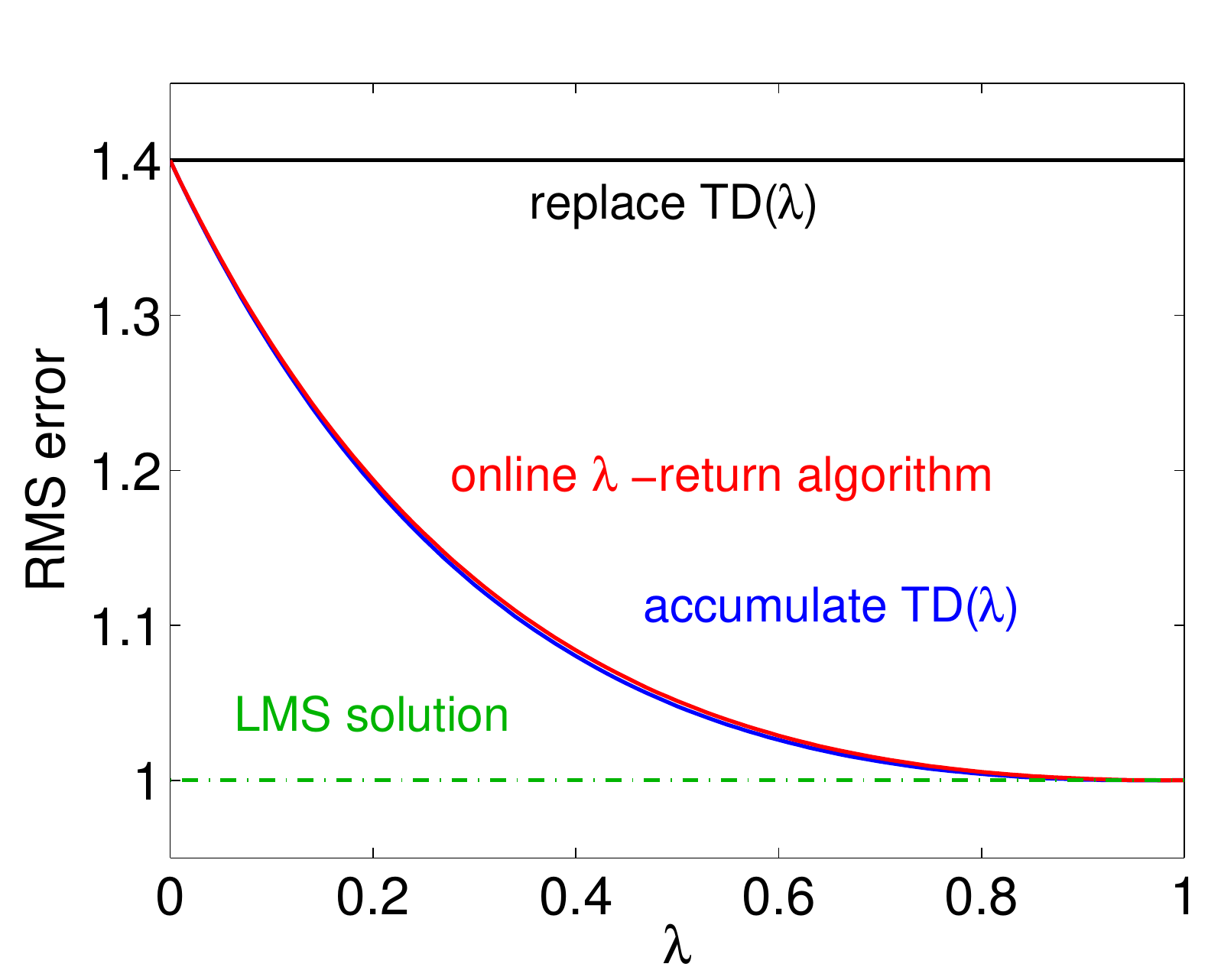}
\caption{{\it Left: } Two-state example. {\it Right: } The RMS error after convergence for different $\lambda$ (at $\alpha = 0.01$). We consider values to be converged if the error changed less than 1\% over the last 100 time steps. } 
\label{fig:two-state example}
\end{center}
\end{figure}

\section{True Online TD($\lambda$)}

The online $\lambda$-return algorithm is impractical on many domains: the memory it uses, as well as the computation required per time step increases linearly with time. Fortunately, it is possible to rewrite the update equations of the online $\lambda$-return algorithm  to a different set of update equations that can be implemented with a computational complexity that is independent of time. In fact, this alternative set of update equations differs from the update equations of accumulate TD($\lambda$) only by two extra terms, each of which can be computed efficiently. The algorithm implementing these equations is called true online TD($\lambda$) and is discussed below.

\subsection{The Algorithm}
\label{sec:algorithm}

For the online $\lambda$-return algorithm, at each time step a sequence of updates is performed. The length of this sequence, and hence the computation per time step, increases over time. However, it is possible to compute the weight vector resulting from the sequence at time step $t+1$ directly from the weight vector resulting from the sequence at time step $t$. This results in the following update equations (see Appendix \ref{sec:derivation} for the derivation):
\begin{eqnarray}
\delta_t &=& R_{t+1} + \gamma \w_t^\tr \x_{t+1}   - \w_{t}^\tr \x_{t} \,,\label{eq:to_delta_update}\\
{\bs e}_t &=& \gamma\lambda {\bs e}_{t-1} +  \x_t -  \alpha \gamma \lambda ({\bs e}_{t-1}^\tr \,\x_t) \,\x_t \,,\label{eq:to_trace_update}\\
\w_{t+1} &=&  \w_t + \alpha \delta_t\,{\bs e}_{t} + \alpha ( \w_{t}^\tr \x_{t} - \w_{t - 1}^\tr \x_{t} ) ( {\bs e}_t -  \x_t )\,,\quad \label{eq:to_weight_update}
\end{eqnarray}
for $t \geq 0$, and with ${\bs e}_{-1} = {\bs 0}$. Compared to accumulate TD($\lambda$), both the trace update and the weight update have an additional term. 
We call a trace updated in this way a \emph{dutch trace};  we call the term $\alpha ( \w_{t}^\tr \x_{t} - \w_{t - 1}^\tr \x_{t} ) ( {\bs e}_t -  \x_t )$ the \emph{TD-error time-step correction}, or simply the $\delta$-correction.
Algorithm \ref{al:true TD(lambda)} shows pseudocode that implements these equations.\footnote{When using a time-dependent step-size (e.g., when annealing the step-size) use the pseudocode from Section \ref{sec:time-dependent step-size}. For reasons explained in that section this requires a modified trace update. That pseudocode is the same as the pseudocode from \citet{vanseijen:icml14}.}


\begin{algorithm}[tb]
\begin{algorithmic}[0]
\STATE {\bf INPUT: $\alpha, \lambda, \gamma, \w_{init}$}
\STATE $\w \leftarrow \w_{init}$
\STATE Loop (over episodes):
\STATE \qquad obtain initial $\x$
\STATE \qquad${\bs e} \leftarrow {\bs 0}  \,; \,\, { V}_{old} \leftarrow 0$
\STATE \qquad While terminal state has not been reached, do:
\STATE \qquad\qquad obtain next feature vector $\x'$ and reward $R$
\STATE \qquad\qquad $  V \leftarrow \w^\tr\x$
\STATE \qquad\qquad $ V' \leftarrow \w^\tr\x'$
\STATE \qquad\qquad $\delta \leftarrow R + \gamma\,  V' -  V$
\STATE \qquad\qquad $ {\bs e} \leftarrow  \gamma\lambda {\bs e}  + \x - \alpha \gamma\lambda  ({\bs e}^\tr \x )\,\x$
\STATE \qquad\qquad $\w \leftarrow  \w + \alpha (\delta +  V -  V_{old}) {\bs e} - \alpha ( V -  V_{old})\x$
\STATE  \qquad\qquad $ V_{old} \leftarrow  V'$
\STATE \qquad\qquad $\x \leftarrow \x' $
\caption{true online TD($\lambda$)}
\label{al:true TD(lambda)}
\end{algorithmic}
\end{algorithm}

In terms of computation time, true online TD($\lambda$) has a (slightly) higher cost due to the two extra terms that have to be accounted for. While the computation-time complexity of true online TD($\lambda$) is the same as that of accumulate/replace TD($\lambda$)---$\mathcal{O}(n)$ per time step with $n$ being the number of features---, the actual computation time can be close to twice as much in some cases. In other cases (for example if sparse feature vectors are used), the computation time  of true online TD($\lambda$) is only a fraction more than that of accumulate/replace TD($\lambda$). In terms of memory, true online TD($\lambda$) has the same cost as accumulate/replace TD($\lambda$).

\subsection{When Can a Performance Difference be Expected?}
\label{sec:relative performance}

In Section \ref{sec:online forward view}, a number of examples were shown where the online $\lambda$-return algorithm outperforms accumulate/replace TD($\lambda$). Because true online TD($\lambda$) is simply an efficient implementation of the online $\lambda$-return algorithm, true online TD($\lambda$) will outperform accumulate/replace TD($\lambda$) on these examples as well.  But not in all cases will there be a performance difference. For example, it follows from Theorem \ref{theorem:main} that when appropriately small step-sizes are used, the difference between the online $\lambda$-return algorithm/true online TD($\lambda$) and accumulate TD($\lambda$) is negligible. In this section, we identify two other factors that affect whether or not there will be a performance difference. While the focus of this section is on performance \emph{difference} rather than performance \emph{advantage}, our experiments will show that true online TD($\lambda$) performs always at least as well as accumulate TD($\lambda$) and replace TD($\lambda$). In other words, our experiments suggest that whenever there is a performance difference, it is in favour of true online TD($\lambda$).

The first factor is the $\lambda$ parameter and follows straightforwardly from the true online TD($\lambda$) update equations.
\begin{prop}
For $\lambda = 0$,  accumulate TD($\lambda$), replace TD($\lambda$) and the online $\lambda$-return algorithm / true online TD($\lambda$) behave the same.
\end{prop}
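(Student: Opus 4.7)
The plan is a direct verification by substitution: plug $\lambda = 0$ into each of the three trace updates and the three weight updates, and observe that they all collapse to the same recursion.

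First I would handle the eligibility traces. With $\lambda = 0$ the factor $\gamma\lambda$ is zero, so the accumulating trace (\ref{eq:trace_update}) immediately reduces to ${\bs e}_t = \x_t$. For the replacing trace, the componentwise definition gives ${\bs e}_t[i] = 0$ whenever $\x_t[i] = 0$ and ${\bs e}_t[i] = 1$ whenever $\x_t[i] = 1$, which (for binary features) is again exactly $\x_t$. For the dutch trace (\ref{eq:to_trace_update}), both the $\gamma\lambda\,{\bs e}_{t-1}$ term and the correction $\alpha\gamma\lambda({\bs e}_{t-1}^\tr \x_t)\x_t$ vanish, leaving ${\bs e}_t = \x_t$. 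So all three trace vectors coincide with $\x_t$ and, crucially, become independent of $\lambda = 0$.

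Second I would observe what this does to the weight updates. For accumulate and replace TD($\lambda$), equation (\ref{eq:weight_update}) becomes $\w_{t+1} = \w_t + \alpha\delta_t\,\x_t$. For true online TD($\lambda$), the $\delta$-correction term $\alpha(\w_t^\tr \x_t - \w_{t-1}^\tr \x_t)({\bs e}_t - \x_t)$ in (\ref{eq:to_weight_update}) contains the factor ${\bs e}_t - \x_t$, which is identically zero by the first step; hence this update also reduces to $\w_{t+1} = \w_t + \alpha\delta_t\,\x_t$. With the common initialization $\w_0 = \w_{init}$ and ${\bs e}_{-1} = {\bs 0}$, a trivial induction on $t$ then shows that accumulate TD($0$), replace TD($0$), and true online TD($0$) generate identical weight sequences $\{\w_t\}$ on any trajectory.

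Finally, to handle the online $\lambda$-return algorithm part of the claim, I would invoke the equivalence between true online TD($\lambda$) and the online $\lambda$-return algorithm (the derivation referenced by equations (\ref{eq:to_delta_update})--(\ref{eq:to_weight_update}) in Appendix \ref{sec:derivation}), which holds for every $\lambda$ and in particular at $\lambda = 0$. Composing this with the collapse established above closes the proposition.

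There is really no hard step here: the whole argument is just substituting $\lambda = 0$ and reading off the resulting recurrences. The only place where one has to be slightly careful is the replacing-trace case, where the definition is componentwise and restricted to binary features, but this causes no trouble since the proposition compares the three TD variants only where all of them are defined.
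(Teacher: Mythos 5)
Your proof is correct and follows essentially the same route as the paper's: substitute $\lambda=0$ to see that all three trace updates reduce to ${\bs e}_t = \x_t$, and note that the $\delta$-correction then vanishes because ${\bs e}_t - \x_t = {\bs 0}$. The paper's own proof is just a terser version of this same argument, so your extra detail (the explicit collapse of the weight recursions and the appeal to the forward-view equivalence) is fine but not a different method.
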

\begin{proof}
For $\lambda = 0$, the accumulating-trace update, the replacing-trace update and the dutch-trace update all reduce to ${\bs e}_t = \x_t$. In addition, because ${\bs e}_t = \x_t$, the $\delta$-correction of true online TD($\lambda$) is 0.
\end{proof}
Because the behaviour of TD($\lambda$) for small $\lambda$ is close to the behaviour of TD(0),  it follows that significant performance differences will only be observed when $\lambda$ is large.

The second factor is related to how often a feature has a non-zero value. We start again with a proposition that highlights a condition under which the different TD($\lambda$) versions behave the same. The proposition makes use of an accumulating trace at time step $t-1$, ${\bs e}^{acc}_{t-1}$, whose non-recursive form is:
\begin{equation}
{\bs e}^{acc}_{t-1} = \sum_{k=0}^{t-1} (\gamma\lambda)^{t-1-k}  \x_k \,.
\label{eq:non-recursive}
\end{equation}
Furthermore, the proposition uses ${\bs x}[i]$ to denote the $i$-th element of vector ${\bs x}$.
\begin{prop}
If for all features $i$ and at all time steps $t$
\begin{equation}
{\bs e}_{t-1}^{acc}[i] \cdot \x_t[i]  = 0\,,
\label{prop:eq}
\end{equation}
then accumulate TD($\lambda$), replace TD($\lambda$) and the online $\lambda$-return algorithm / true online TD($\lambda$) behave the same (for any $\lambda$).
\label{prop:e}
\end{prop}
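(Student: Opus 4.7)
My plan is to prove Proposition 2 by induction on $t$, establishing that under the stated condition the weight vectors and eligibility traces produced by accumulate TD($\lambda$), replace TD($\lambda$), and true online TD($\lambda$) all coincide at every time step. The base case is immediate: all three algorithms start from the common $\w_{init}$ with ${\bs e}_{-1} = {\bs 0}$. For the inductive step, assume at time $t$ all three algorithms agree on $\w_t$ and ${\bs e}_{t-1}$, and that this common ${\bs e}_{t-1}$ equals ${\bs e}^{acc}_{t-1}$ (this is automatic, because under the induction hypothesis the shared trace has been evolving along the shared trajectory by the accumulate rule). The hypothesis of the proposition then yields the key identity
$${\bs e}_{t-1}^\tr \x_t = \sum_i {\bs e}_{t-1}[i]\,\x_t[i] = 0.$$

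From this identity I would verify three things in turn. First, the three trace updates produce the same ${\bs e}_t$: the accumulate and dutch traces differ only by $-\alpha\gamma\lambda({\bs e}_{t-1}^\tr \x_t)\,\x_t$, which vanishes by the key identity; for the replace trace (with binary features), components where $\x_t[i] = 0$ match trivially, while components where $\x_t[i] = 1$ give $1$ under replace and $\gamma\lambda\,{\bs e}_{t-1}[i] + 1 = 0 + 1 = 1$ under accumulate, since the hypothesis forces ${\bs e}_{t-1}[i] = 0$ whenever $\x_t[i] = 1$. Second, accumulate and replace trivially agree on $\w_{t+1}$ since both $\delta_t$ and ${\bs e}_t$ are now common. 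Third, the $\delta$-correction $\alpha(\w_t^\tr \x_t - \w_{t-1}^\tr \x_t)({\bs e}_t - \x_t)$ in true online TD($\lambda$) vanishes: using the accumulate weight update at time $t-1$ (valid by the induction hypothesis) gives $\w_t^\tr \x_t - \w_{t-1}^\tr \x_t = \alpha\,\delta_{t-1}\,{\bs e}_{t-1}^\tr \x_t = 0$ by the key identity. So the true online weight update reduces to $\w_{t+1} = \w_t + \alpha\,\delta_t\,{\bs e}_t$, matching the other two algorithms.

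The only subtlety is handling $t = 0$, where the pseudocode uses $V_{old} = 0$ rather than $\w_{-1}^\tr \x_0$. But since ${\bs e}_{-1} = {\bs 0}$, the dutch-trace update gives ${\bs e}_0 = \x_0$, so the factor ${\bs e}_0 - \x_0 = {\bs 0}$ annihilates the $\delta$-correction at $t=0$ regardless of $V_{old}$, and the three algorithms agree after the first update. The main bookkeeping obstacle is the $\delta$-correction term, which involves weights from two consecutive time steps; resolving it cleanly hinges on the observation that the induction hypothesis lets one rewrite $\w_t^\tr \x_t - \w_{t-1}^\tr \x_t$ using the accumulate rule and then apply the proposition's hypothesis to the resulting inner product ${\bs e}_{t-1}^\tr \x_t$.
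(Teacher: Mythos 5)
Your proof is correct and rests on exactly the same three observations as the paper's: the hypothesis forces ${\bs e}^{acc}_{t-1}[i]=0$ whenever $\x_t[i]\neq 0$ (so accumulating and replacing traces coincide), it forces $({\bs e}^{acc}_{t-1})^\tr\x_t=0$ (so the accumulating and dutch traces coincide), and the $\delta$-correction is proportional to $(\w_t-\w_{t-1})^\tr\x_t=\alpha\delta_{t-1}({\bs e}^{acc}_{t-1})^\tr\x_t=0$. The only difference is that you package these facts in an explicit induction (with the $t=0$ edge case), which the paper leaves implicit; the substance is identical.
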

\begin{proof}
Condition (\ref{prop:eq}) implies that if $\x_t[i] \neq 0$, then ${\bs e}_{t-1}^{acc}[i]  = 0$. From this it follows that for binary features the accumulating-trace update can be written as a replacing-trace update at every time step:
\begin{eqnarray*}
{\bs e}_t^{acc}[i] &:=& \gamma\lambda {\bs e}_{t-1}^{acc}[i] +  \x_t[i] \,,\\
&=&  \begin{cases} \gamma\lambda  {\bs e}^{acc}_{t-1}[i]\,,  &\mbox{if } \x_{t}[i] = 0;\\
1\,, & \mbox{if }  \x_{t}[i]= 1\,.\end{cases}
\end{eqnarray*}
Hence, accumulate TD($\lambda$) and replace TD($\lambda$) perform exactly the same updates. 

Furthermore, condition (\ref{prop:eq}) implies that $({\bs e}_{t-1}^{acc})^\tr \x_t = 0$. Hence, the accumulating-trace update can also be written  as a dutch trace update at every time step:
\begin{eqnarray*}
{\bs e}_t^{acc} &:=& \gamma\lambda {\bs e}_{t-1}^{acc} +  \x_t \,,\\
&=& \gamma\lambda {\bs e}_{t-1}^{acc} +  \x_t   -  \alpha \gamma \lambda (({\bs e}_{t-1}^{acc})^\tr \,\x_t) \,\x_t \,.
\end{eqnarray*}
In addition, note that the $\delta$-correction is proportional to $\w_{t}^\tr \x_{t} - \w_{t - 1}^\tr \x_{t}$, which can be written as $\big(\w_{t} - \w_{t - 1} \big)^\tr\x_{t}$. The value $\big(\w_{t} - \w_{t - 1} \big)^\tr\x_{t}$ is proportional to  $({\bs e}_{t-1}^{acc})^\tr \x_t$ for accumulate TD($\lambda$). Because $({\bs e}_{t-1}^{acc})^\tr \x_t = 0$, accumulate TD($\lambda$) can add a $\delta$-correction at every time step without any consequence. This shows that accumulate TD($\lambda$) makes the same updates as true online TD($\lambda$). 
\end{proof}
An example of a domain where the condition of Proposition \ref{prop:e} holds is a domain with tabular features (each state is represented with a unique standard-basis vector), where a state is never revisited within the same episode. 

The condition of Proposition \ref{prop:e} holds approximately when the value $\left| {\bs e}_{t-1}^{acc}[i] \cdot \x_t[i] \right|$ is close to 0 for all features at all time steps. In this case, the different TD($\lambda$) versions will perform very similarly. It follows from Equation (\ref{eq:non-recursive}) that this is the case when there is a long time delay between the time steps that a feature has a non-zero value. Specifically, if there is always at least $n$ time steps between two subsequent times that a feature $i$ has a non-zero value with $\gamma\lambda^n$ being very small, then $\left| {\bs e}_{t-1}^{acc}[i] \cdot \x_t[i] \right|$ will always be close to 0. Therefore, in order to see a large performance difference, the same features should have a non-zero value often and within a small time frame (relative to $\gamma\lambda$). 

Summarizing the analysis so far: in order to see a performance difference $\alpha$ and $\lambda$ should be sufficiently large, and the same features should have a non-zero value often and within a small time frame. Based on this summary, we can address a related question: on what type of domains will there be a performance difference between true online TD($\lambda$) with optimized parameters and accumulate/replace TD($\lambda$) with optimized parameters. The first two conditions suggest that the domain should result in a relatively large optimal $\alpha$ and optimal  $\lambda$. This is typically the case for domains with a relatively low variance on the return. The last condition can be satisfied in multiple ways. It is for example satisfied by domains that have non-sparse feature vectors (that is, domains for which at any particular time step most features have a non-zero value). 

\subsection{True Online Sarsa($\lambda$)}

 TD($\lambda$) and true online TD($\lambda$) are policy evaluation methods. However, they can be turned into control methods in a straightforward way.
From a learning perspective, the main difference is that the prediction of the expected return should be conditioned on the state and action, rather than only on the state. 
This means that an estimate of the action-value function $q_{\pi}$ is being learned, rather than of the state-value function $v_{\pi}$. 


Another difference is that instead of having a fixed policy that generates the behaviour, the policy depends on the action-value estimates. Because these estimates typically improve over time, so does the policy. The (on-policy) control counterpart of TD($\lambda$) is the popular Sarsa($\lambda$) algorithm. The control counterpart of true online TD($\lambda$) is `true online Sarsa($\lambda$)'. Algorithm \ref{al:true online Sarsa(lambda)} shows pseudocode for true online Sarsa($\lambda$).

\begin{algorithm}[tb]
\begin{algorithmic}[0]
\STATE {\bf INPUT: $\alpha, \lambda, \gamma, \w_{init} $}
\STATE $\w \leftarrow \w_{init}$
\STATE Loop (over episodes):
\STATE \qquad obtain initial state $S$
\STATE \qquad select action $A$ based on state $S$ \quad (for example $\epsilon$-greedy)
\STATE \qquad ${\bs \psi} \leftarrow $ features corresponding to $S, A$
\STATE \qquad ${\bs e} \leftarrow {\bs 0}\,;\,\, {Q}_{old} \leftarrow 0$
\STATE \qquad While terminal state has not been reached, do:
\STATE \qquad\qquad take action $A$, observe next state $S'$ and reward $R$
\STATE \qquad\qquad select action $A'$ based on state $S'$
\STATE \qquad\qquad ${\bs \psi}' \leftarrow $ features corresponding to $S', A'$   \quad(if $S'$ is terminal state, ${\bs \psi}'  \leftarrow {\bs 0}$)
\STATE \qquad\qquad $ Q \leftarrow \w^\tr {\bs \psi}$ 
\STATE \qquad\qquad $Q' \leftarrow \w^\tr {\bs \psi'} $
\STATE \qquad\qquad $\delta \leftarrow R + \gamma\, Q' - Q$
\STATE \qquad\qquad $ {\bs e} \leftarrow  \gamma\lambda{\bs e} + {\bs \psi}  - \alpha  \gamma\lambda ({\bs e}^\tr {\bs \psi}  )\,{\bs \psi} $
\STATE \qquad\qquad $\w \leftarrow  \w + \alpha (\delta + Q - Q_{old})\,{\bs e} - \alpha (Q - Q_{old}) {\bs \psi}$
\STATE  \qquad\qquad $Q_{old} \leftarrow Q'$
\STATE \qquad\qquad ${\bs \psi} \leftarrow {\bs \psi'}  \,;\,\,A \leftarrow A'$
\caption{true online Sarsa($\lambda$)}
\label{al:true online Sarsa(lambda)}
\end{algorithmic}
\end{algorithm}

To ensure accurate estimates for all state-action values are obtained, typically some exploration strategy has to be used. 
A simple, but often sufficient strategy is to use an $\epsilon$-greedy behaviour policy.  That is, given current state $S_t$, with probability $\epsilon$ a random action is selected, and with probability $1 - \epsilon$ the greedy action is selected:
$$ A_{t}^{greedy} = \arg\max_a  \w_t^\tr {\bs \psi}(S_t,a)\,,$$
with ${\bs \psi}(s,a)$ an action-feature vector, and $\w_t^\tr {\bs \psi}(s,a)$ a (linear) estimate of $q_\pi(s,a)$ at time step $t$. A common way to derive an action-feature vector ${\bs \psi}(s,a)$ from a state-feature vector $\x(s)$ involves an action-feature vector of size $n|\mathcal{A}|$, where $n$ is the number of state features and $|\mathcal{A}|$ is the number of actions. Each action corresponds with a block of $n$ features in this action-feature vector. The features in ${\bs \psi}(s,a)$ that correspond to action $a$ take on the values of the state features; the features corresponding to other actions have a value of 0.

\section{Empirical Study}
\label{sec:empirical study}

This section contains our main empirical study, comparing TD($\lambda$), as well as Sarsa($\lambda$), with their true online counterparts. For each method and each domain, a scan over the step-size $\alpha$ and the trace-decay parameter $\lambda$ is performed such that the optimal performance can be compared. In Section \ref{sec:discussion}, we discuss the results.

\subsection{Random MRPs}
\label{sec:random mrps}

For our first series of experiments we used randomly constructed Markov reward processes (MRPs).\footnote{The code for the MRP experiments is published online at: \url{https://github.com/armahmood/totd-rndmdp-experiments}. The process we used to construct the MRPs is based on the process used by Bhatnagar, Sutton, Ghavamzadeh and Lee (2009).} An MRP can be interpreted as an MDP with only a single action per state. Consequently, there is only one policy possible. We represent a random MRP as a 3-tuple $(k, b, \sigma)$, consisting of $k$, the number of states; $b$, the branching factor (that is, the number of next states with a non-zero transition probability); and $\sigma$, the standard deviation of the reward. 
An MRP is constructed as follows. The $b$ potential next states for a particular state are drawn from the total set of states at random, and without replacement. The transition probabilities to those states are randomized as well (by partitioning the unit interval at $b-1$ random cut points).
The expected value of the reward for a transition is drawn from a normal distribution with zero mean and unit variance. The actual reward is drawn from a normal distribution with a mean equal to this expected reward and standard deviation $\sigma$. There are no terminal states.

We compared the performance of TD($\lambda$) on three different MRPs: one with a small number of states, $(10, 3, 0.1)$, one with a larger number of states, $(100, 10, 0.1)$, and one with a larger number of states but a low branching factor and no stochasticity for the reward, $(100, 3, 0)$.  The discount factor $\gamma$ is $0.99$ for all three MRPs. Each MRP is evaluated using three different representations. The first representation consists of \emph{tabular} features, that is, each state is represented with a unique standard-basis vector of $k$ dimensions. The second representation is based on \emph{binary} features. This binary representation is constructed by first assigning indices, from 1 to $k$, to all states. Then, the binary encoding of the state index is used as a feature vector to represent that state. The length of a feature vector is determined by the total number of states: for $k = 10$, the length is 4; for $k = 100$, the length is 7. As an example, for $k = 10$ the binary feature vectors of states 1, 2 and 3 are $(0,0,0,1)$,$(0,0,1,0)$ and $(0,0,1,1)$, respectively. Finally, the third representation uses non-binary features. For this representation each state is mapped to a 5-dimensional feature vector, with the value of each feature drawn from a normal distribution with zero mean and unit variance. After all the feature values for a state are drawn, they are normalized such that the feature vector has unit length. Once generated, the feature vectors are kept fixed for each state. Note that replace TD($\lambda$) cannot be used with this representation, because replacing traces are only defined for binary features (tabular features are a special case of this).

In each experiment, we performed a scan over $\alpha$ and $\lambda$. Specifically, between 0 and 0.1, $\alpha$ is varied according to $10^i$ with $i$ varying from -3 to -1 with steps of 0.2, and from 0.1 to 2.0 (linearly) with steps of 0.1. In addition, $\lambda$ is varied from 0 to 0.9 with steps of 0.1 and from 0.9 to 1.0 with steps of 0.01. The initial weight vector is the zero vector in all domains. As performance metric we used the mean-squared error (MSE) with respect to the LMS solution during early learning (for k = 10, we averaged over the first 100 time steps; for k = 100, we averaged over the first 1000 time steps). We normalized this error  by dividing it by the MSE under the initial weight estimate. 

\begin{figure}[tb]
\begin{center}
\includegraphics[width=\columnwidth]{./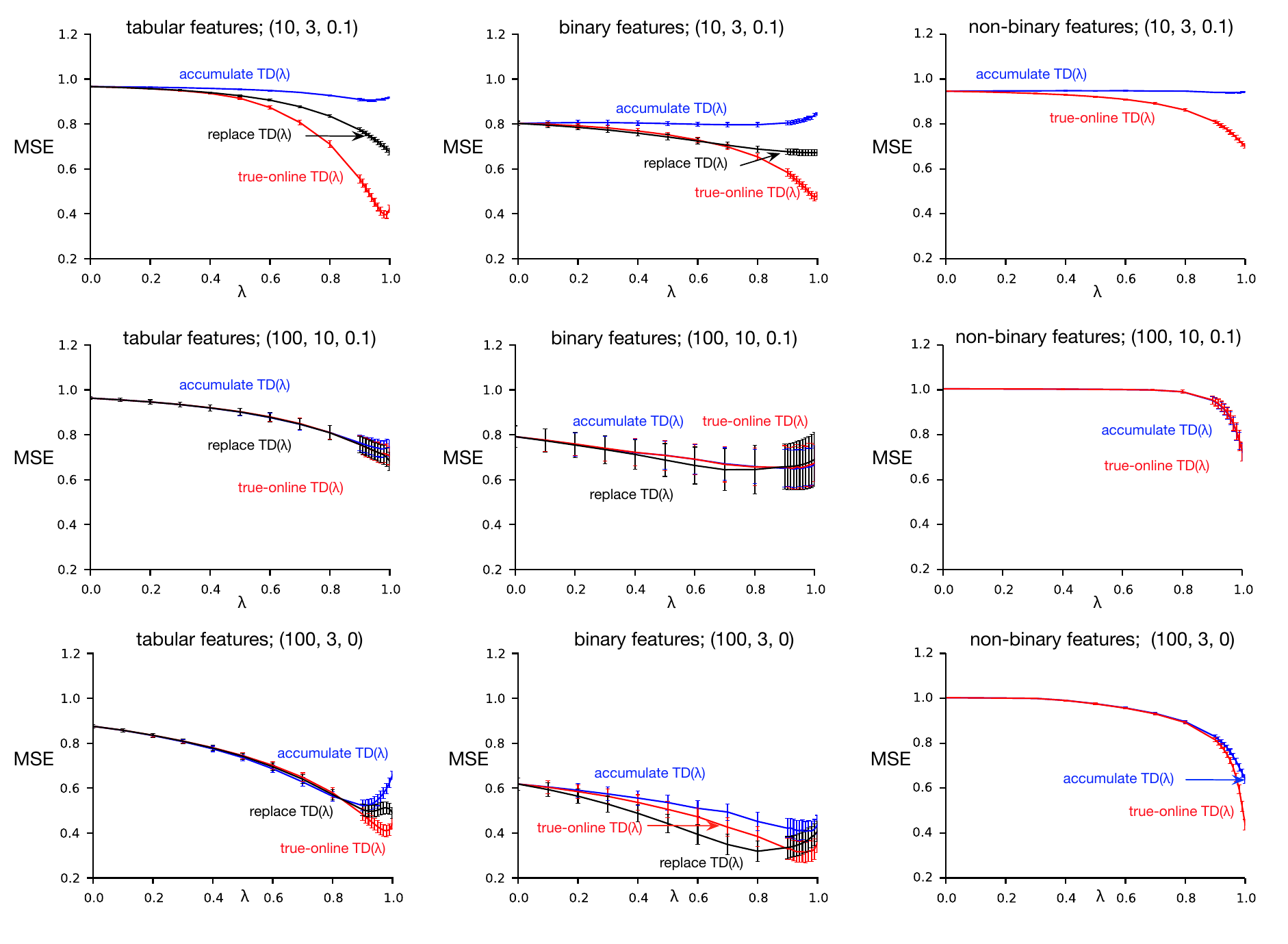}
\caption{MSE error during early learning for three different  MRPs, indicated by $(k, b, \sigma)$, and three different representations. The error shown is at optimal $\alpha$ value. }
\label{fig:mdp_all}
\end{center}
\end{figure}

Figure \ref{fig:mdp_all} shows the results for different $\lambda$ at the best value of $\alpha$.  In Appendix \ref{sec:detailed mrps}, the results for all $\alpha$ values are shown. The optimal performance of true online TD($\lambda$) is at least as good as the optimal performance of accumulate TD($\lambda$) and replace TD($\lambda$), on all domains and for all representations. A more in-depth discussion of these results is provided in Section  \ref{sec:discussion}.

\subsection{Predicting Signals From a Myoelectric Prosthetic Arm}

In this experiment, we compared the performance of true online TD($\lambda$) and TD($\lambda$) on a real-world data-set consisting of sensorimotor signals measured during the human control of an electromechanical robot arm. The source of the data is a series of manipulation tasks performed by a participant with an amputation, as presented by \cite{pilarski:ra13}. In this study, an amputee participant used signals recorded from the muscles of their residual limb to control a robot arm with multiple degrees-of-freedom (Figure \ref{fig:myocontrol}). Interactions of this kind are known as {\em myoelectric control} \citep[see, for example,][]{parker:jek06}. 

\begin{figure}[tbp]
\begin{center}
\includegraphics[width=4in]{./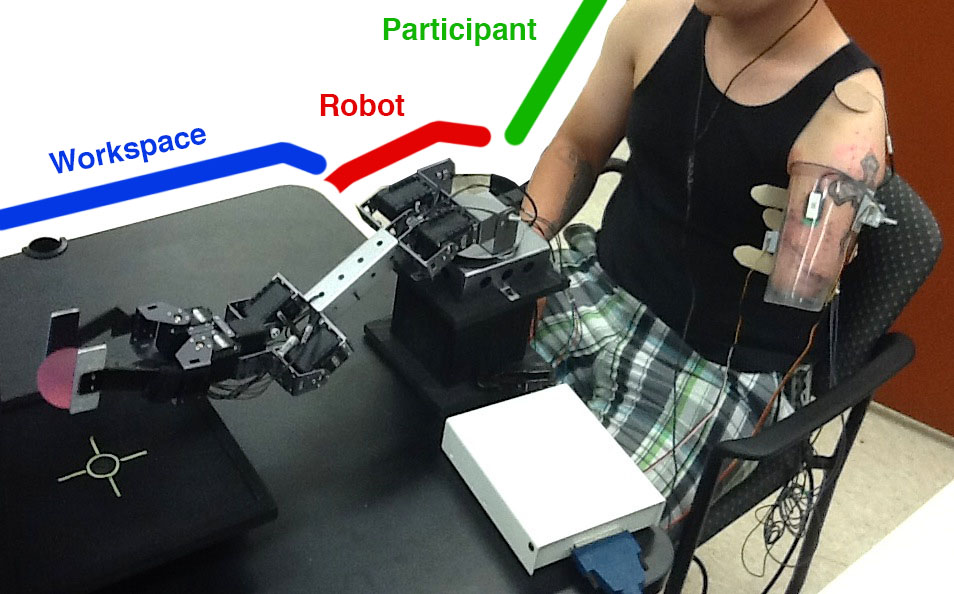}
\caption{Source of the input data stream and predicted signals used in this experiment: a participant with an amputation performing a simple grasping task using a myoelectrically controlled robot arm, as described in \cite{pilarski:ra13}. More detail on the subject and experimental setting can be found in \citet{hebert:nsre14}.}
\label{fig:myocontrol}
\end{center}
\end{figure}

For consistency and comparison of results, we used the same source data and prediction learning architecture as published in \cite{pilarski:ra13}. In total, two signals are predicted: grip force and motor angle signals from the robot's hand. 
Specifically, the target for the prediction is a discounted sum of each signal over time, similar to return predictions \citep[see general value functions and nexting;][]{sutton:aamas11, modayil:ab14}.
Where possible, we used the same implementation and code base as \cite{pilarski:ra13}. Data for this experiment consisted of 58,000 time steps of recorded sensorimotor information, sampled at 40 Hz (i.e., approximately 25 minutes of experimental data). The state space consisted of a tile-coded representation of the robot gripper's position, velocity, recorded gripping force, and two muscle contraction signals from the human user.  A standard implementation of tile-coding was used, with ten bins per signal, eight overlapping tilings, and a single active bias unit. This results in a state space with 800,001 features, 9 of which were active at any given time.  Hashing was used to reduce this space down to a vector of 200,000 features that are then presented to the learning system.  All signals were normalized between 0 and 1 before being provided to the function approximation routine. The discount factor for predictions of both force and angle was  $\gamma=0.97$, as in the results presented by \cite{pilarski:ra13}. Parameter sweeps over  $\lambda$ and $\alpha$ are conducted for all three methods.
The performance metric is the mean absolute return error over all 58,000 time steps of learning, normalized by dividing by the error for $\lambda = 0$. 

Figure  \ref{fig:myoelectricresults} shows the performance for the angle as well as the force predictions at the best $\alpha$ value for different values of $\lambda$. In Appendix \ref{sec:detailed myo}, the results for all $\alpha$ values are shown. The relative performance of replace TD($\lambda$) and accumulate TD($\lambda$) depends on the predictive question being asked. For predicting the robot's grip force signal---a signal with small magnitude and rapid changes---replace TD($\lambda$) is better  than accumulate TD($\lambda$) at all $\lambda$ values larger than 0. However, for predicting the robot's hand actuator position, a smoothly changing signal that varies between a range of $\sim$300--500, accumulate TD($\lambda$)  dominates replace TD($\lambda$). On both prediction tasks, true online TD($\lambda$) dominates accumulate TD($\lambda$) and replace TD($\lambda$).

\begin{figure}[tb]
\begin{center}
\includegraphics[width=4.5in]{./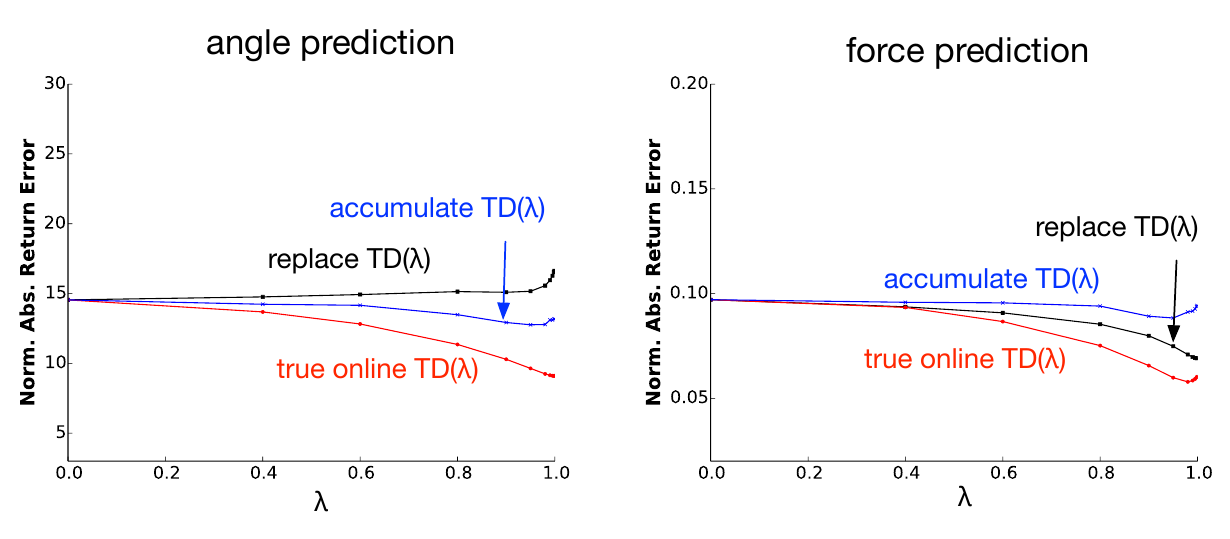}\\
\caption{Performance as function of $\lambda$ at the optimal $\alpha$ value, for the prediction of the servo motor angle (left), as well as the grip force (right).}
\label{fig:myoelectricresults}
\end{center}
\end{figure}

\subsection{Control in the ALE Domain Asterix}

In this final experiment, we compared the performance of true online Sarsa($\lambda$) with that of accumulate Sarsa($\lambda$) and replace Sarsa($\lambda$), on a domain from the Arcade Learning Environment (ALE) \citep{bellemare:jair13, defazio:arxiv14, mnih:nature15}, called Asterix. The ALE is a general testbed that 
provides an interface to hundreds of Atari 2600 games.\footnote{We used  ALE version 0.4.4 for our experiments. The code for the Asterix experiments is published online at: \url{https://github.com/mcmachado/TrueOnlineSarsa}.} 


In the Asterix domain, the agent controls a yellow avatar, which has to collect `potion' objects, while avoiding `harp' objects  (see Figure \ref{fig:asterix} for a screenshot). Both potions and harps move across the screen horizontally. Every time the agent collects a potion it
receives a reward of 50 points, and every time it touches a harp it looses a life (it has three lives in total). 
The agent can use the actions \emph{up, right, down,} and \emph{left},  combinations of two directions, and a \emph{no-op} action, resulting in 9 actions in total.
The game ends after the agent has lost three lives, or after 5 minutes, whichever comes first.
\begin{figure}[tb]
  \centering
    \includegraphics[width=0.4\textwidth]{./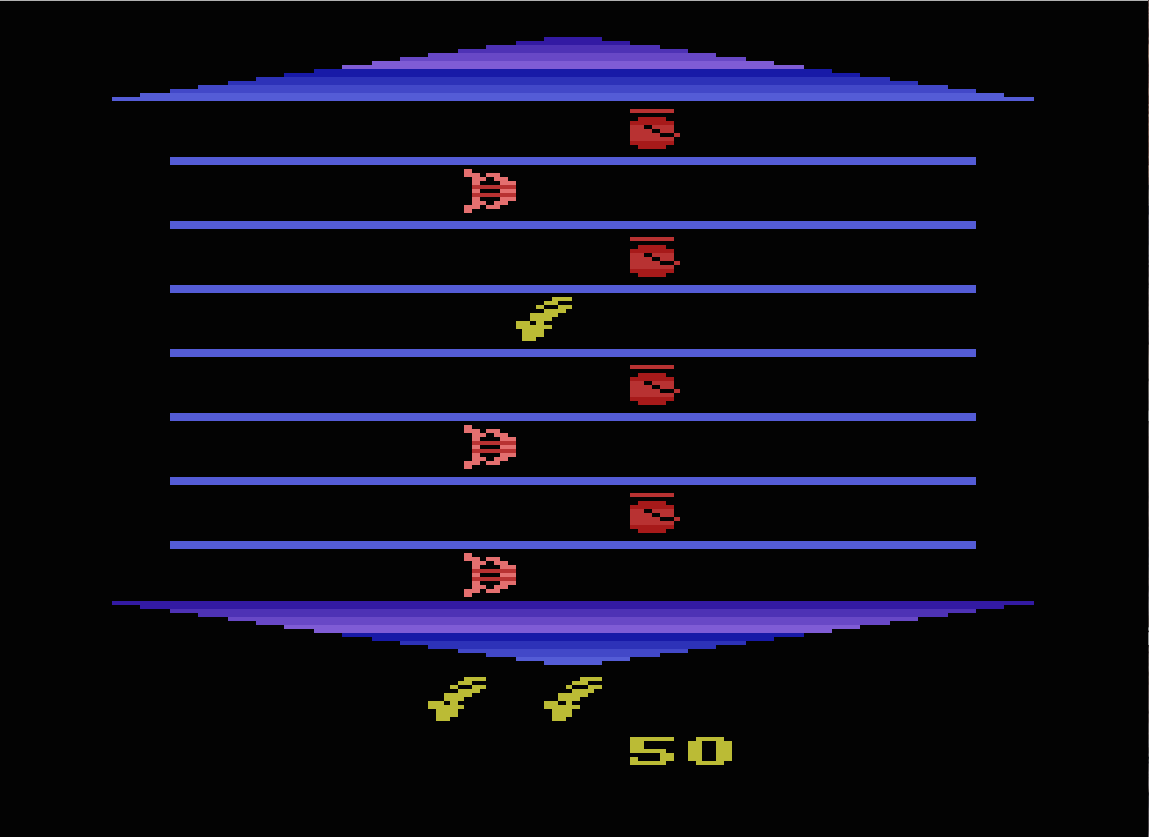}
    \caption{Screenshot of the game \textsc{Asterix}.}
  \label{fig:asterix}    
\end{figure}

We use linear function approximation using features derived from the screen pixels. Specifically, we use what \citet{bellemare:jair13} call the
\emph{Basic} feature set, which ``encodes the presence of colours on the Atari 2600 screen.'' 
It is obtained by first subtracting the game screen background \citep[see][sec.~3.1.1]{bellemare:jair13} 
and then dividing the remaining screen in to $16 \times 14$ tiles of size $10 \times 15$ pixels. 
Finally, for each tile, one binary feature is generated for each of the $128$ available colours, 
encoding whether a colour is active or not in that tile. This generates 28,672 features (plus a 
bias term).

Because episode lengths can vary hugely (from about 10 seconds all the way up to 5 minutes), constructing a fair performance metric is non-trivial. 
For example, comparing the average return on the first $N$ episodes of two methods is only fair if they have seen roughly the same amount of samples in those episodes, which is not guaranteed for this domain. On the other hand, looking at the total reward collected for the first $X$ samples is also not a good metric, because there is no negative reward associated to dying. To resolve this, we look at the return per episode, averaged over the first $X$ samples. More specifically, our metric consists of the average score per episode while learning for 20 hours (4,320,000 frames). In addition, we averaged the resulting number over 400 independent runs.

\begin{figure}[tb]
\begin{center}
\includegraphics[width=9cm]{./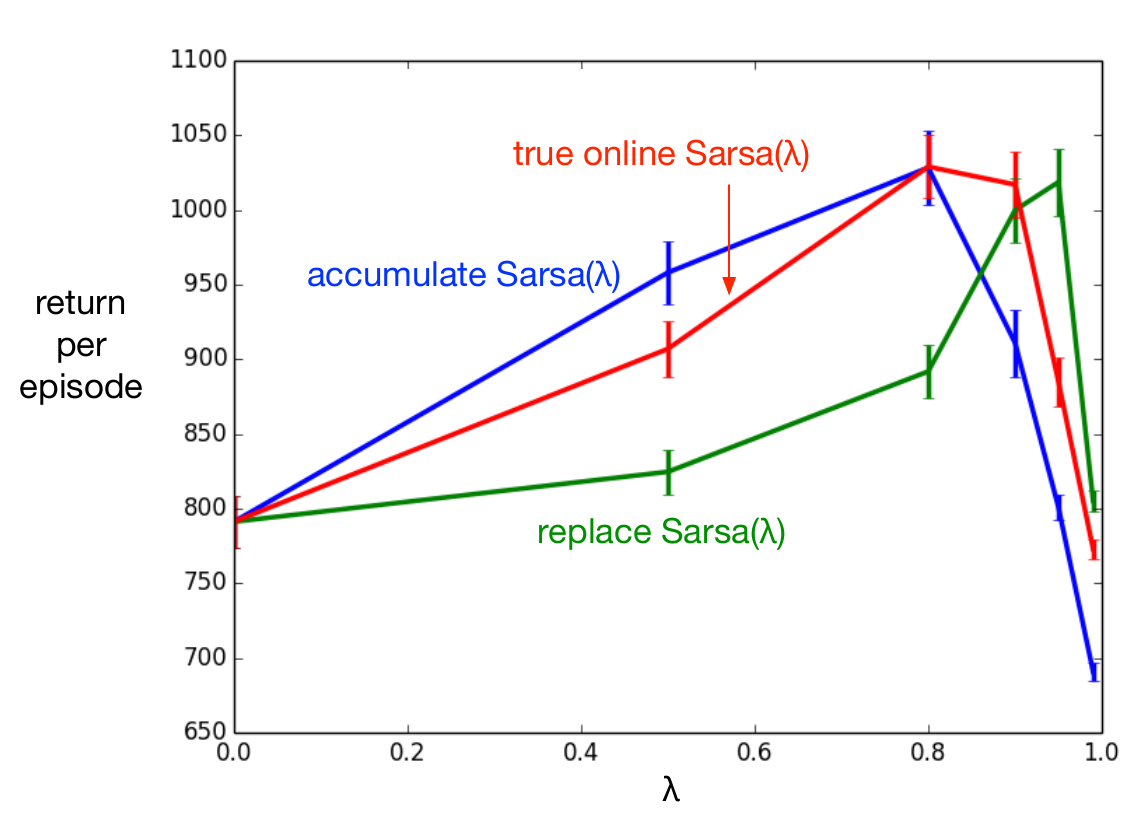}
\caption{Return per episode, averaged over the first 4,320,000 frames as well as 400 independent runs, as function of $\lambda$, at optimal $\alpha$, on the Asterix domain. }
\label{fig: results_asterix}
\end{center}
\end{figure}

As with the evaluation experiments, we performed a scan over the step-size $\alpha$ and the trace-decay parameter $\lambda$. Specifically, we looked at all combinations of $\alpha \in \{0.20, 0.50, 0.80$, $1.10, 1.40, 1.70, 2.00\}$ and $\lambda \in \{0.00, 0.50, 0.80, 0.90, 0.95, 0.99\}$ (these values were determined during a preliminary parameter sweep). We used a discount factor $\gamma = 0.999$ and $\epsilon$-greedy exploration with $\epsilon = 0.01$. The weight vector was initialized to the zero vector. Also, as \citet{bellemare:jair13}, we take an action at each 5 frames. This decreases the algorithms running time 
and avoids ``super-human'' reflexes. The results are shown in Figure \ref{fig: results_asterix}. On this domain, the optimal performance of all three versions of Sarsa($\lambda$) is similar.

Note that the way we evaluate a domain is computationally very expensive: we perform scans over $\lambda$ and $\alpha$, and use a large number of independent runs to get a low standard error. In the case of Asterix, this results in a total of $7 \cdot 6 \cdot 400 = 16, 800$ runs per method. This rigorous evaluation prohibits us unfortunately to run experiments on the full suite of ALE domains.

\subsection{Discussion}
\label{sec:discussion}

Figure \ref{fig:results summary} summarizes the performance of the different TD($\lambda$) versions on all evaluation domains. Specifically, it shows the error for each method at its best settings of $\alpha$ and $\lambda$. The error is normalized by dividing it by the error at $\lambda = 0$ (remember that all versions of TD($\lambda$) behave the same for $\lambda = 0$). 
Because  $\lambda = 0$ lies in the parameter range that is being optimized over, the normalized error can never be higher than 1. If for a method/domain the normalized error is equal to 1, this means that setting $\lambda$ higher than 0 either has no effect, or that the error gets worse. In either case, eligibility traces are not effective for that method/domain.

Overall, true online TD($\lambda$) is clearly better than accumulate TD($\lambda$) and replace TD($\lambda$) in terms of optimal performance. Specifically, for each considered domain/representation, the error for true online TD($\lambda$) is either smaller or equal to the error of accumulate/replace TD($\lambda$). This is especially impressive, given the wide variety of domains, and the fact that the computational overhead for true online TD($\lambda$) is small (see Section \ref{sec:algorithm} for details). 

The observed performance differences correspond well with the analysis from Section \ref{sec:relative performance}.
In particular, note that MRP (10, 3, 0.1) has less states than the other two MRPs, and hence the chance that the same feature has a non-zero value within a small time frame is larger. The analysis correctly predicts that this results in larger performance differences. Furthermore, MRP $(100,3,0)$ is less stochastic than MRP $(100,10, 0.1)$, and hence it has a smaller variance on the return. Also here, the experiments correspond with the analysis, which predicts that this results in a larger performance difference. 

On the Asterix domain, the performance of the three Sarsa($\lambda$) versions is similar. This is in accordance with the evaluation results, which showed that the size of the performance difference is domain dependent. In the worst case, the performance of the true online method is similar to that of the regular method.

\begin{figure}[tb]
\hspace{-2cm}
\includegraphics[width=17cm]{./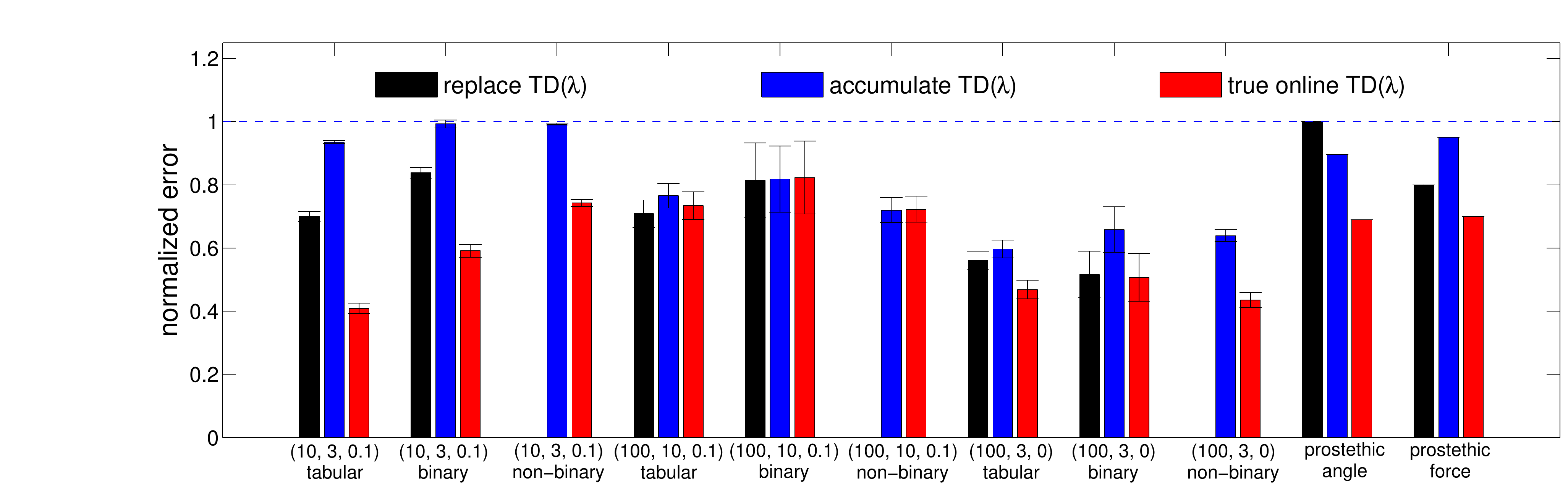}
\qquad
\caption{Summary of the evaluation results: error at optimal $(\alpha, \lambda)$-settings for all domains/representations, normalized with the TD(0) error.}
\label{fig:results summary}
\end{figure}

The optimal performance is not the only factor that determines how good a method is; what also matters is how easy it is to find this performance. The detailed plots in appendices  \ref{sec:detailed mrps} and \ref{sec:detailed myo} reveal that the parameter sensitivity of accumulate TD($\lambda$) is much higher than that of true online TD($\lambda$) or replace TD($\lambda$). This is clearly visible for MRP (10, 3, 0.1)  (Figure \ref{fig:randomMRP1}), as well as the experiments with the myoelectric prosthetic arm (Figure \ref{fig:myoelectricresults_all}).

There is one more thing to take away from the experiments. In MRP (10, 3, 0.1) with non-binary features, replace TD($\lambda$) is not applicable and accumulate TD($\lambda$) is ineffective. However, true online TD($\lambda$) was still able to obtain a considerable performance advantage with respect to TD(0). This demonstrates that true online TD($\lambda$) expands the set of domains/representations where eligibility traces are effective. This could potentially have far-reaching consequences. Specifically, using non-binary features becomes a lot more interesting. Replacing traces are not applicable to such representations, while accumulating traces can easily result in divergence of values. For true online TD($\lambda$), however, non-binary features are not necessarily more challenging than binary features. 
Exploring new, non-binary representations could potentially further improve the performance for true online TD($\lambda$) on domains such as the myoelectic prosthetic arm or the Asterix domain.

\section{Other True Online Methods}

In Appendix \ref{sec:derivation}, it is shown that the true online TD($\lambda$) equations can be derived directly from the online forward view equations. By using different online forward views, new true online methods can be derived. Sometimes, small changes in the forward view, like using a time-dependent step-size, can result in surprising changes in the true online equations.  In this section, we look at a number of such variations.

\subsection{True Online TD($\lambda$) with Time-Dependent Step-Size}
\label{sec:time-dependent step-size}

When using a time-dependent step-size in the base equation of the forward view (Equation \ref{eq:real-time update}) and deriving the update equations following the procedure from Appendix \ref{sec:derivation}, it turns out that a slightly different trace definition appears. We indicate this new trace using a `+' superscript: ${\bs e}^+$. For fixed step-size, this new trace definition is equal to:
$${\bs e}^+_t = \alpha {\bs e}_t\,, \qquad\mbox{ for all } t.$$
Of course, using ${\bs e}^+_t$ instead of ${\bs e}_t$ also changes the weight vector update slightly. Below, the full set of update equations is shown:
\begin{eqnarray*}
\delta_t &=& R_{t+1} + \gamma \w_t^\tr \x_{t+1}   - \w_{t}^\tr \x_{t} \,,\\
{\bs e}^+_t &=& \gamma\lambda {\bs e}^+_{t-1} +  \alpha_t \x_t -  \alpha_t \gamma \lambda \big(({\bs e}^+_{t-1})^\tr \,\x_t\big) \,\x_t \,,\\
\w_{t+1} &=&  \w_t + \delta_t\,{\bs e}^+_{t} + \big( \w_{t}^\tr \x_{t} - \w_{t - 1}^\tr \x_{t} \big) \big( {\bs e}^+_t -  \alpha_t \x_t \big) \,.
\end{eqnarray*}
In addition, ${\bs e}^+_{-1}:= 0$. We can simplify the weight update equation slightly, by using
$$\delta_t' = \delta_t +  \w_{t}^\tr \x_{t} - \w_{t - 1}^\tr \x_{t}\,,$$
which changes the update equations to:\footnote{These equations are the same as in the original true online paper \citep{vanseijen:icml14}.}
\begin{eqnarray}
\delta_t' &=& R_{t+1} + \gamma \w_t^\tr \x_{t+1}   - \w_{t - 1}^\tr \x_{t} \,,\\
{\bs e}^+_t &=& \gamma\lambda {\bs e}^+_{t-1} +  \alpha_t \x_t -  \alpha_t \gamma \lambda \big(({\bs e}^+_{t-1})^\tr \,\x_t\big) \,\x_t \,,\\
\w_{t+1} &=&  \w_t + \delta'_t\,{\bs e}^+_{t} - \alpha_t \big(\w_{t}^\tr \x_{t} - \w_{t - 1}^\tr \x_{t} \big) \x_t \,.\quad
\end{eqnarray}
Algorithm \ref{al:true TD(lambda)} shows the corresponding pseudocode. Of course, this pseudocode can also be used for constant step-size.

\begin{algorithm}[tb]
\begin{algorithmic}[0]
\STATE {\bf INPUT:} $\lambda, \w_{init}$, $\alpha_t$ for $t \geq 0$
\STATE $\w \leftarrow \w_{init}\,;\,\, t \leftarrow 0$
\STATE Loop (over episodes):
\STATE \qquad obtain initial $\x$
\STATE \qquad${\bs e}^+ \leftarrow {\bs 0}\,; \,\, {V}_{old} \leftarrow 0$
\STATE \qquad While terminal state is not reached, do:
\STATE \qquad\qquad obtain next feature vector $\x'$ and reward $R$
\STATE \qquad\qquad $V \leftarrow \w^\tr\x$
\STATE \qquad\qquad $V' \leftarrow \w^\tr\x'$
\STATE \qquad\qquad $\delta' \leftarrow R + \gamma\, V' - V_{old}$
\STATE \qquad\qquad ${\bs e}^+ \leftarrow   \gamma\lambda  {\bs e}^+  + \alpha_t  \x -   \alpha_t \gamma\lambda (({\bs e}^+)^\tr \x )\,\x$
\STATE \qquad\qquad $\w \leftarrow  \w + \delta' {\bs e}^+ -  \alpha_t (V - V_{old}) \x$
\STATE  \qquad\qquad $V_{old} \leftarrow V'$
\STATE \qquad\qquad $\x \leftarrow \x' $
\STATE \qquad\qquad $t \leftarrow t+1 $
\caption{true online TD($\lambda$) with time-dependent step-size}
\label{al:time-dependent true online TD(lambda)}
\end{algorithmic}
\end{algorithm}

\subsection{True Online Version of Watkins's Q($\lambda$)}

So far, we just considered \emph{on-policy} methods, that is, methods that evaluate a policy that is the same as the policy that generates the samples. However, the true online principle can also be applied to \emph{off-policy} methods, for which the evaluation policy is different from the behaviour policy. As a simple example, consider Watkins's Q($\lambda$) \citep{watkins:thesis89}. This is an off-policy method that evaluates the greedy policy given an arbitrary behaviour policy. It does this by combining accumulating traces with a TD error that uses the maximum state-action value of the successor state:
$$\delta_t = R_{t+1} + \max_a Q(S_t, a) -  Q(S_t, A_t)\,.$$
In addition, traces are reset to 0 whenever a non-greedy action is taken.

\begin{algorithm}[tb]
\begin{algorithmic}[0]
\STATE {\bf INPUT: $\alpha, \lambda, \gamma, \w_{init}, \Psi $}
\STATE $\w \leftarrow \w_{init}$
\STATE Loop (over episodes):
\STATE \qquad obtain initial state $S$
\STATE \qquad select action $A$ based on state $S$ \quad (for example $\epsilon$-greedy)
\STATE \qquad ${\bs \psi} \leftarrow $ features corresponding to $S, A$
\STATE \qquad ${\bs e} \leftarrow {\bs 0}\,;\,\, {Q}_{old} \leftarrow 0$
\STATE \qquad While terminal state has not been reached, do:
\STATE \qquad\qquad take action $A$, observe next state $S'$ and reward $R$
\STATE \qquad\qquad select action $A'$ based on state $S'$
\STATE \qquad\qquad $A^* \leftarrow \arg\max_a \w^\tr {\bs \psi}(S',a) $  (if $A'$ ties for the max, then $A^* \leftarrow A'$)
\STATE \qquad\qquad ${\bs \psi}' \leftarrow $ features corresponding to $S', A^*$   \quad(if $S'$ is terminal state, ${\bs \psi}'  \leftarrow {\bs 0}$)
\STATE \qquad\qquad $Q \leftarrow \w^\tr {\bs \psi} $
\STATE \qquad\qquad $Q' \leftarrow \w^\tr {\bs \psi'} $
\STATE \qquad\qquad $\delta \leftarrow R + \gamma\, Q' - Q$
\STATE \qquad\qquad $ {\bs e} \leftarrow  \gamma\lambda{\bs e} + {\bs \psi}  - \alpha  \gamma\lambda \big({\bs e}^\tr {\bs \psi}  \big)\,{\bs \psi} $
\STATE \qquad\qquad $\w \leftarrow  \w + \alpha (\delta + Q - Q_{old}) \,{\bs e} - \alpha (Q - Q_{old}) {\bs \psi} $
\STATE \qquad\qquad if  $A' \neq A^*$ :  ${\bs e} \leftarrow {\bs 0}$
\STATE  \qquad\qquad $Q_{old} \leftarrow Q'$
\STATE \qquad\qquad ${\bs \psi} \leftarrow {\bs \psi'}  \,;\,\,A \leftarrow A'$
\caption{true online version of Watkins's Q($\lambda$)}
\label{al:true online Watkins Q}
\end{algorithmic}
\end{algorithm}

From an online forward-view perspective, the strategy of Watkins's Q($\lambda$) method can be interpreted as a growing update target that stops growing once a non-greedy action is taken. Specifically, let $\tau$ be the first time step \emph{after} time step $t$ that a non-greedy action is taken, then the interim update target for time step $t$ can be defined as:
\begin{displaymath}
    U_t^h := (1 - \lambda) \sum_{n = 1}^{z-t-1} \lambda^{n-1} \tilde G_t^{(n)} + \lambda^{z-t-1} \tilde G_t^{(z-t)} \thinspace, \qquad z = min \{h, \tau \}\,,
\end{displaymath}
with
\begin{displaymath}
\tilde G_t^{\,(n)} = \sum_{k=1}^{n} \gamma^{k-1} R_{t+k} + \gamma^n  \max_a \w_{t+n-1}^\tr {\bs \psi}(S_{t+n},a)\,.
\end{displaymath}

Algorithm \ref{al:true online Watkins Q} shows the pseudocode for the true online method that corresponds with this update target definition. A problem with Watkins's Q($\lambda$) is that if the behaviour policy is very different from the greedy policy traces are reset very often, reducing the overall effect of the traces. In Section \ref{sec:related work}, we discuss more advanced off-policy methods. 

\subsection{Tabular True Online TD($\lambda$)}
\label{sec:tabular true online TD(lambda)}

Tabular features are a special case of linear function approximation. Hence, the update equations for true online TD($\lambda$) that  are presented so far also apply to the tabular case. However, we discuss it here separately, because the simplicity of this special case can provide extra insight. 

Rewriting the true online update equations (equations \ref{eq:to_delta_update} -- \ref{eq:to_weight_update}) for the special case of tabular features results in:
\begin{eqnarray*}
\delta_t &=& R_{t+1} + \gamma V_t(S_{t+1})   -  V_t(S_{t}) \,,\\
e_t(s) &=& \begin{cases} \gamma\lambda e_{t-1}(s)\,, & \mbox{ if }  s \neq S_t \,;\\ (1 - \alpha) \gamma\lambda  e_{t-1}(s) + 1\,, & \mbox{ if } s = S_t\,,  \end{cases}\\
V_{t+1}(s) &=& \begin{cases} V_{t}(s) + \alpha \big(\delta_t + V_t(S_t)- V_{t-1}(S_{t})\big)\,e_{t}(s)\,, & \mbox{ if }  s \neq S_t\,; \\
V_{t}(s) + \alpha \big(\delta_t + V_t(S_t)- V_{t-1}(S_{t})\big)\,e_{t}(s) - \alpha\big(V_t(S_t)- V_{t-1}(S_{t})\big) \,,& \mbox{ if }  s = S_t \,.\end{cases}
\end{eqnarray*}
What is interesting about the tabular case is that the dutch-trace update reduces to a particularly simple form. In fact, for the tabular case, a dutch-trace update is equal to the weighted average between an accumulating-trace update and a replacing-trace update, with the weight of the former $(1-\alpha)$ and the latter $\alpha$.
Algorithm \ref{al:tabular true online TD(lambda)} shows the corresponding pseudocode.

\begin{algorithm}[tb]
\begin{algorithmic}[0]
\STATE initialize $v(s)$  for all $s$
\STATE Loop (over episodes):
\STATE \qquad initialize $S$
\STATE \qquad $e(s) \leftarrow 0$ for all $s$
\STATE \qquad ${V}_{old} \leftarrow 0$
\STATE \qquad While $S$ is not terminal, do:
\STATE \qquad\qquad obtain next state $S'$ and reward $R$
\STATE \qquad\qquad $\Delta V \leftarrow V(S) - V_{old}$
\STATE \qquad\qquad $V_{old} \leftarrow V(S')$
\STATE \qquad\qquad $\delta \leftarrow R + \gamma\, V(S') - V(S)$
\STATE \qquad\qquad $e(S) \leftarrow (1-\alpha) e(S) + 1$
\STATE \qquad\qquad For all $s$:
\STATE \qquad\qquad\qquad $V(s) \leftarrow  V(s) + \alpha (\delta + \Delta V) e(s)$
\STATE \qquad\qquad\qquad $e(s) \leftarrow \gamma\lambda e(s)$
\STATE \qquad\qquad $V(S) \leftarrow V(S) - \alpha\Delta V$
\STATE \qquad\qquad $S \leftarrow S' $
\caption{tabular true online TD($\lambda$)}
\label{al:tabular true online TD(lambda)}
\end{algorithmic}
\end{algorithm}

\section{Related Work}
\label{sec:related work}

Since the first publication on true online TD($\lambda$) \citep{vanseijen:icml14}, several related papers have been published, extending the underlying concepts and improving the presentation. In sections \ref{sec:dutch traces}, \ref{sec:bv derivation} and \ref{sec:non-linear}, we review those papers. In Section \ref{sec: other variations}, we discuss other variations of TD($\lambda$).

\subsection{True Online Learning and Dutch Traces}
\label{sec:dutch traces}

As mentioned before, the traditional forward view, which is based on the $\lambda$-return, is inherently an offline forward view, because the $\lambda$-return is constructed from data up to the end of an episode. As a consequence, the work regarding equivalence between a forward view and a backward view traditionally focused on the final weight vector $\w_T$. This changed in 2014, when two papers introduced an \emph{online} forward view with a corresponding backward view that has an exact equivalence at each moment in time \citep{vanseijen:icml14, sutton:icml14}.
While both papers introduced an online forward view, the two forward views presented are very different from each other. One difference is that the forward view introduced by van Seijen \& Sutton is an on-policy forward view, whereas the forward view by Sutton et al. is an off-policy forward view. However, there is an even more fundamental difference related to how the forward views are constructed. In particular, the forward view by van Seijen \& Sutton is constructed in such a way that at each moment in time the weight vector can be interpreted as the result of a sequence of updates of the form:
\begin{equation}
\w_{k+1}  =\w_k + \alpha \big( U_k -  \w_k^\tr\x_k \big)\x_k\,,  \qquad \mbox{for }0 \leq k < t\,.
\label{eq:on update}
\end{equation}
By contrast, the forward view by Sutton et al. gives the following interpretation:
\begin{equation}
\w_{t}  = \w_0 + \alpha\sum_{k=0}^{t-1} \delta_k \x_k,
\label{eq:off update}
\end{equation}
with $\delta_k$ some multi-step TD error. 
 Of course, the different forward views also result in different backward views. Whereas the backward view of Sutton et al. uses a generalized version of an accumulating trace, the backward view of van Seijen \& Sutton introduced a completely new type of trace. 
 
The advantage of a forward view based on (\ref{eq:on update}) instead of (\ref{eq:off update}) is that it results in much more stable updates. In particular, the sensitivity to divergence of accumulate TD($\lambda$) is a general side-effect of (\ref{eq:off update}), whereas (\ref{eq:on update}) is much more robust with respect to divergence. As a result, true online TD($\lambda$) not only has the property that it has an exact equivalence with an online forward view at all times, it consistently dominates TD($\lambda$) empirically.

The strong performance of true online TD($\lambda$) motivated \citet{vanhasselt:uai14} to construct an off-policy version of the forward view of true online TD($\lambda$). The corresponding backward view resulted in the algorithm true online GTD($\lambda$), which empirically outperforms GTD($\lambda$). They also introduced the term `dutch traces' for the new eligiblity trace.

Van Hasselt \& Sutton (\citeyear{vanhasselt:arxiv14}) showed that dutch traces are not only useful for TD learning. In an offline setting with no bootstrapping using dutch traces can result in certain computational advantages. To understand why, consider the Monte Carlo algorithm (MC),  which updates state values at the end of an episode using (\ref{eq:on update}), with the full return as update target. MC requires storing all the feature vectors and rewards during an episode, so the memory complexity is linear in the length of the episode. Moreover, the required computation time is distributed very unevenly: during an episode almost no computation is required, but at the end of an episode there is a huge peak in computation time due to all the updates that need to be performed. With dutch traces an alternative implementation can be made that results in the same final weight vector but that does not require storing all the feature vectors and where the required computation time is spread out evenly over all the time steps. Van Hasselt \& Sutton refer to this appealing property as span-independence: the memory and computation time required per time step is constant and independent of the span of the prediction.\footnote{The span of the prediction refers to the time difference between the first prediction and the moment its target is known (e.g., for episodic tasks it corresponds to the length of an episode).}

\subsection{Backward View Derivation}
\label{sec:bv derivation}

The task of finding an efficient backward view that corresponds exactly with a particular online forward view is not easy. Moreover, there is no guarantee that there exists an efficient implementation of a particular online forward view. Often, minor changes in the forward view determine whether or not an efficient backward view can be constructed. This created the desire to somehow automate the process of constructing an efficient backward view.

Van Seijen \& Sutton (\citeyear{vanseijen:icml14}) did not provide a direct derivation of the backward view update equations; they simply proved that the forward view and the backward view equations result in the same weight vectors. \citet{sutton:icml14} were the first to attempt to come up with a general strategy for deriving a backward view (although for forward views based on Equation \ref{eq:off update}). Van Hasselt et al. (\citeyear{vanhasselt:uai14}) took the approach of providing a theorem that proves equivalence between a general forward view and a corresponding general backward view. They showed that the forward/backward view of true online TD($\lambda$) is a special case of this general forward/backward view. They showed the same for the off-policy method that they introduced---true online GTD($\lambda$). Recently, \citet{mahmood:uai15} extended this theorem further by proving equivalence between an even more general forward view and backward view.

Furthermore, \cite{vanhasselt:arxiv14} derived backward views for a series of increasingly complex forward views. The derivation of the true online TD($\lambda$) equations in Appendix \ref{sec:derivation} is similar to those derivations.

\subsection{Extension to Non-Linear Function Approximation}
\label{sec:non-linear}

The linear update equations of the online forward view presented in Section \ref{sec:online forward view equations} can be easily extended to the case of non-linear function approximation. Unfortunately, it appears to be impossible to construct an efficient backward view with exact equivalence in the case of non-linear function approximation. The reason is that the derivation in Appendix \ref{sec:derivation} makes use of the fact that the gradient with respect to the value function is independent of the weight vector; this does not hold for non-linear function approximation.

Fortunately,  \citet{vanseijen:16} shows that many of the benefits of true online learning can also be achieved in the case of non-linear function approximation by using an alternative forward view (but still based on Equation \ref{eq:on update}). While this alternative forward view is not fully online (there is a delay in the updates), it can be implemented efficiently.

\subsection{Other Variations on TD($\lambda$)}
\label{sec: other variations}

Several variations on TD($\lambda$) other than those treated in this article have been suggested in the literature. \citet{schapire:ml96} introduced a variation of TD($\lambda$) for which upper and lower bounds on performance can be derived and proven. \citet{konidaris:nips11} introduced TD$_\gamma$, a parameter-free alternative to TD($\lambda$) based on a multi-step update target called the $\gamma$-return. TD$_\gamma$ is an offline algorithm with a computational cost proportional to the episode-length. Furthermore, \citet{thomas:nips15} proposed a method based on a multi-step update target, which they call the $\Omega$-return. The $\Omega$-return can account for the correlation of different length returns, something that both the $\lambda$-return and the $\gamma$-return cannot. However, it is expensive to compute and it is open question whether efficient approximations exist.

\section{Conclusions}

We tested the hypothesis that true online TD($\lambda$) (and true online Sarsa($\lambda$)) dominates TD($\lambda$) (and Sarsa($\lambda$)) with accumulating as well as with replacing traces by performing experiments over a wide range of domains. Our extensive results support this hypothesis.  
In terms of learning speed, true online TD($\lambda$) was often better, but never worse than TD($\lambda$) with either accumulating or replacing traces, across all domains/representations that we tried. Our analysis showed that especially on domains with non-sparse 
features and a relatively low variance on the return a large difference in learning speed can be expected. More generally, true online TD($\lambda$) has the advantage over TD($\lambda$) with replacing traces that it can be used with non-binary features, and it has the advantage over TD($\lambda$) with accumulating traces that it is less sensitive with respect to its parameters.
In terms of computation time, TD($\lambda$) has a slight advantage. In the worst case, true online TD($\lambda$) is twice as expensive. In the typical case of sparse features, it is only fractionally more expensive than TD($\lambda$). Memory requirements are the same. 
 Finally, we outlined an approach for deriving new true online methods, based on rewriting the equations of an online forward view. This may lead to new, interesting methods in the future.

\acks{The authors thank Hado van Hasselt for extensive discussions leading to the refinement of these ideas. Furthermore, the authors thank the anonymous reviewers for their valuable suggestions, resulting in a substantially improved presentation. This work was supported by grants from Alberta Innovates -- Technology Futures and the National Science and Engineering Research Council of Canada.
Computing resources were provided by Compute Canada through WestGrid.}

\appendix

\newpage

\section{Proof of Theorem 1}
\label{sec:proof}

\medskip
{\bf \noindent Theorem 1}
{\it \, Let $\w_0$ be the initial weight vector, $\w_{t}^{td}$ be the weight vector at time $t$ computed by accumulate TD($\lambda$), and $\w_{t}^{\lambda}$ be the weight vector at time $t$ computed by the online $\lambda$-return algorithm.
Furthermore, assume that $\sum_{i=0}^{t-1} \Delta_i^t \neq {\boldsymbol 0}$. Then, for all time steps $t$:
$$\frac{|| \w_{t}^{td}  - \w_{t}^{\lambda}  ||}{|| \w_{t}^{td}  - \w_0  ||} \rightarrow 0\,, \qquad\mbox{as $\,\,\,\alpha \rightarrow 0$}.$$
}

\begin{proof}
We prove the theorem by showing that $|| \w_{t}^{td}  - \w_{t}^{\lambda}  || / || \w_{t}^{td}  - \w_0  ||$ can be approximated by $\mathcal{O}(\alpha) / \big(C +  \mathcal{O}(\alpha)\big)$ as $\alpha \rightarrow 0$,  with $C > 0$. For readability, we will not use the `td' and `$\lambda$' superscripts; instead, we always use weights with double indices for the online $\lambda$-return algorithm and weights with single indices for accumulate TD($\lambda$). 

The update equations for accumulate TD($\lambda$) are:
\begin{eqnarray*}
\delta_t &=& R_{t+1} + \gamma \,\w_t^\tr \x_{t+1}   - \w_t^\tr \x_t\,, \\
{\bs e}_t &=& \gamma\lambda {\bs e}_{t-1} +  \x_t\,, \\
\w_{t+1} &=&  \w_t + \alpha \delta_t\,{\bs e}_{t}\,.
\end{eqnarray*}
By incremental substitution, we can write $\w_t$ directly in terms of $\w_0$:
\begin{eqnarray*}
{\w}_t &=& \w_0 + \alpha \sum_{j=0}^{t-1} \delta_j {\bs e}_j \,,\\
&=&  \w_0 + \alpha \sum_{j=0}^{t-1} \delta_j \sum_{i=0}^{j} (\gamma\lambda)^{j-i} \,  \x_i \,,\\
&=&  \w_0 + \alpha \sum_{j=0}^{t-1} \sum_{i=0}^{j} (\gamma\lambda)^{j-i} \delta_{j}\,  \x_i\,.
\end{eqnarray*}
Using the summation rule  $\sum_{j=k}^n \sum_{i=k}^j a_{i,j} = \sum_{i=k}^n \sum_{j=i}^n a_{i,j}$ we can rewrite this as:
\begin{equation}
 {\w}_t = \w_0 + \alpha \sum_{i=0}^{t-1} \sum_{j=i}^{t-1} (\gamma\lambda)^{j-i} \delta_{j}\,  \x_i \,.
 \label{eq:eq111}
 \end{equation}

As part of the derivation shown in Appendix \ref{sec:derivation}, we prove the following (see Equation \ref{eq:L_difference}):
$$G^{\lambda|h+1}_i = G^{\lambda|h}_i + (\lambda\gamma)^{h-i}  \delta_h' \,,$$
with
$$\delta_h' := R_{h+1} + \gamma \,\w_h^\tr \x_{h+1} - \w_{h-1}^\tr \x_{h}\,.$$
By applying this sequentially for $i+1 \leq h < t$, we can derive:
\begin{equation}
G^{\lambda | t}_i  =   G_i^{\lambda | i+1} + \sum_{j=i+1}^{t-1}   (\gamma\lambda)^{j-i} \delta'_{j}\,.
\label{eq:eq33}
\end{equation}
Furthermore, $G_i^{\lambda | i+1}$ can be written as:
\begin{eqnarray*}
G_i^{\lambda | i+1} &=&  R_{i+1} + \gamma\w_i^\tr \x_{i+1} \,,\\
&=& R_{i+1} + \gamma\w_i^\tr \x_{i+1} - \w_{i-1}^\tr \x_{i} + \w_{i-1}^\tr \x_{i} \,,\\
&=& \delta'_i + \w_{i-1}^\tr \x_{i}\,.
\end{eqnarray*}
Substituting this in (\ref{eq:eq33}) yields:
$$G^{\lambda | t}_i  =   \w_{i-1}^\tr \x_{i}  + \sum_{j=i}^{t-1}   (\gamma\lambda)^{j-i} \delta'_{j} \,.$$
Using that $\delta_j' = \delta_j  + \w_{j}^\tr \x_{j} - \w_{j-1}^\tr \x_{j}$, it follows that
$$ \sum_{j=i}^{t-1}   (\gamma\lambda)^{j-i} \delta_{j} = G^{\lambda | t}_i  -  \w_{i-1}^\tr \x_{i} -  \sum_{j=i}^{t-1}   (\gamma\lambda)^{j-i} (\w_j - \w_{j-1})^\tr \x_j \,.$$
As $\alpha \rightarrow 0$, we can approximate this as:
\begin{eqnarray*}
 \sum_{j=i}^{t-1}   (\gamma\lambda)^{j-i} \delta_{j} &=& G^{\lambda | t}_i  -  \w_{i-1}^\tr \x_{i} + \mathcal{O}(\alpha)\,, \\
 &=& \bar G^{\lambda | t}_i  -  \w_{0}^\tr \x_{i} + \mathcal{O}(\alpha) \,,
 \end{eqnarray*}
with $\bar G^{\lambda | t}_i$ the interim $\lambda$-return that uses $\w_0$ for all value evaluations.
Substituting this in (\ref{eq:eq111}) yields:
\begin{equation}
 {\w}_t = \w_0 + \alpha \sum_{i=0}^{t-1} \big(\bar G^{\lambda | t}_i  -  \w_{0}^\tr \x_{i}  + \mathcal{O}(\alpha) \big)\,  \x_i\,.
 \label{eq:eq12}
\end{equation}

For the online $\lambda$-return algorithm, we can derive the following by sequential substitution of Equation (\ref{eq:real-time update}):
$$\w_t^t = \w_0 + \alpha \sum_{i=0}^{t-1} \Big( G_i^{\lambda|t} - (\w_i^t)^\tr \x_i\Big) \x_i\,.$$
As $\alpha \rightarrow 0$, we can approximate this as:
\begin{equation}
\w_t^t = \w_0 + \alpha \sum_{i=0}^{t-1} \Big(\bar G_i^{\lambda|t} - \w_0^\tr \x_i  +  \mathcal{O}(\alpha) \Big) \x_i\,.
\label{eq:1234}
\end{equation}

Combining (\ref{eq:eq12}) and (\ref{eq:1234}), it follows that as $\alpha \rightarrow 0$:
$$\frac{|| \w_{t}  - \w_{t}^{t}  ||}{|| \w_{t}  - \w_0  ||} = \frac{|| (\w_{t}  - \w_{t}^{t})/\alpha  ||}{|| (\w_{t}  - \w_0)/\alpha  || }  = \frac{\mathcal{O}(\alpha)}{C +  \mathcal{O}(\alpha)}\,,$$
with 
$$ C = \left|\left| \sum_{i=0}^{t-1} \Big(\bar G_i^{\lambda|t} - \w_0^\tr \x_i\Big) \x_i  \right|\right|\ = \left|\left| \sum_{i=0}^{t-1} \Delta_i^t \right|\right|\,.$$
From the condition $\sum_{i=0}^{t-1} \Delta_i^t \neq {\boldsymbol 0}$ it follows that $C > 0$.
\end{proof}

\section{Derivation Update Equations}
\label{sec:derivation}

In this subsection, we derive the update equations of true online TD($\lambda$) directly from the online forward view, defined by equations (\ref{eq:interim lambda return1}) and  (\ref{eq:real-time update})  (and $\w_0^t := \w_{init}$). The derivation is based on expressing $\w_{t+1}^{t+1}$ in terms of $\w_t^t$.

We start by writing $\w_t^t$ directly in terms of the initial weight vector and the interim $\lambda$-returns. First, we rewrite  (\ref{eq:real-time update}) as:
$$ \w_{k+1}^t = (\textbf{I} - \alpha\x_k \x_k^\tr) \,\w_k^t + \alpha\, \x_k G_k^{\lambda|t} \,,$$
with $\textbf{I}$ the identity matrix.
Now, consider $\w_k^t$ for $k=1$ and $k=2$:
\begin{eqnarray*}
\w_1^t &=& (\textbf{I} - \alpha \x_0\x_0^\tr) \w_{init} + \alpha \x_0 G_0^{\lambda|t}\,,\\
\w_2^t &=& (\textbf{I} - \alpha \x_1\x_1^\tr) \w_1^t + \alpha \x_1 G_1^{\lambda|t} \,,\\
&=& (\textbf{I} - \alpha \x_1\x_1^\tr) (\textbf{I} - \alpha \x_0\x_0^\tr) \w_{init} + \alpha (\textbf{I} - \alpha \x_1\x_1^\tr) \x_0 G_0^{\lambda|t} + \alpha \x_1 G_1^{\lambda|t}\,.
\end{eqnarray*}
For general $k \leq t$, we can write:
$$\w_k^t = \textbf{A}^{k-1}_0\, \w_{init} + \alpha \sum_{i=0}^{k-1}  \textbf{A}^{k-1}_{i+1}\, \x_{i} G_{i}^{\lambda|t}\thinspace,$$
where $\textbf{A}^j_i$ is defined as:
$$\textbf{A}^j_i := (\textbf{I} - \alpha \x_j \x_j^\tr )  (\textbf{I} - \alpha \x_{j-1} \x_{j-1}^\tr ) \dots  (\textbf{I} - \alpha \x_i \x_i^\tr ),  \quad \mbox{ for } j \geq i\thinspace,$$
and $\textbf{A}^j_{j+1} := \textbf{I}$.  We are now able to express $\w_t^t$ as:
\begin{equation}
\w_t^t = \textbf{A}^{t-1}_{0}\, \w_{init} + \alpha \sum_{i=0}^{t-1}  \textbf{A}^{t-1}_{i+1}\, \x_{i} G_{i}^{\lambda|t}\thinspace, \label{eq:w_kh}
\end{equation}
Because for the derivation of true online TD($\lambda$), we only need (\ref{eq:w_kh}) and the definition of $G_i^{\lambda|t}$, we can drop the double indices for the weight vectors and use $\w_t := \w_t^t$.

We now derive a compact expression for the difference $G_{i}^{\lambda|t+1}  - G_{i}^{\lambda|t}$: 
\begin{eqnarray*}
G^{\lambda|t+1}_i - G^{\lambda|t}_i &=& (1 - \lambda) \sum_{n = 1}^{t-i} \lambda^{n-1} G_i^{(n)} + \lambda^{t-i} G_i^{(t+1-i)} \,,\\
&& -\,\, (1 - \lambda) \sum_{n = 1}^{t-i-1} \lambda^{n-1} G_i^{(n)} - \lambda^{t-i-1} G_i^{(t-i)} \,,\\
&=& (1 - \lambda) \lambda^{t-i-1} G_i^{(t-i)} + \lambda^{t-i} G_i^{(t+1-i)}  - \lambda^{t-i-1} G_i^{(t-i)}  \,,\\
&=&  \lambda^{t-i} G_i^{(t+1-i)}  -  \lambda^{t-i} G_i^{(t-i)} \,,\\
&=& \lambda^{t-i} \Big(G_i^{(t+1-i)} -  G_i^{(t-i)}\Big)\,,\\
&=&   \lambda^{t-i} \Big( \sum_{k=1}^{t+1-i} \gamma^{k-1} R_{i+k} + \gamma^{t+1-i} \w_t^\tr \x_{t+1}   -  \sum_{k=1}^{t-i} \gamma^{k-1} R_{i+k} - \gamma^{t-i} \w_{t-1}^\tr \x_{t}   \Big)\,,\\
&=&  \lambda^{t-i} \Big( \gamma^{t-i} R_{t+1} +  \gamma^{t+1-i} \w_t^\tr \x_{t+1} - \gamma^{t-i} \w_{t-1}^\tr \x_{t}  \Big)\,,\\
&=& (\lambda\gamma)^{t-i} \Big( R_{t+1} + \gamma \,\w_t^\tr \x_{t+1} - \w_{t-1}^\tr \x_{t} \Big)\,.\\
\end{eqnarray*}
Note that the difference $G^{\lambda|t+1}_i - G^{\lambda|t}_i$ is naturally expressed using a term that looks like a TD error but with a modified time step. We call this the modified TD error, $\delta'_t$:
$$\delta_t' := R_{t+1} + \gamma \,\w_t^\tr \x_{t+1} - \w_{t-1}^\tr \x_{t}.$$
The modified TD error relates to the regular TD error, $\delta_t$, as follows:
\begin{eqnarray}
\delta'_t &=& R_{t+1} + \gamma \,\w_t^\tr \x_{t+1} - \w_{t-1}^\tr \x_{t} \,,\nonumber\\
&=& R_{t+1} + \gamma \,\w_t^\tr \x_{t+1} - \w_{t}^\tr \x_{t} + \w_{t}^\tr \x_{t} - \w_{t-1}^\tr \x_{t} \,,\nonumber\\
&=& \delta_t + \w_{t}^\tr \x_{t} - \w_{t-1}^\tr \x_{t}\thinspace. \label{eq:modified_delta_relation}
\end{eqnarray}
Using $\delta'_t$, the difference $G^{\lambda|t+1}_i - G^{\lambda|t}_i$ can be compactly written as:
\begin{equation}
G^{\lambda|t+1}_i - G^{\lambda|t}_i  = (\lambda\gamma)^{t-i}  \delta_t' \,.\label{eq:L_difference}
\end{equation}

To get the update rule,  $\w_{t+1}$ has to be expressed in terms of $ \w_{t}$. This is done below, using (\ref{eq:w_kh}), (\ref{eq:modified_delta_relation}) and (\ref{eq:L_difference}):
\begin{eqnarray}
\w_{t+1} &=& \textbf{A}_{0}^t\, \w_0 + \alpha \sum_{i=0}^{t}  \textbf{A}^t_{i+1} \, \x_{i} G_{i}^{\lambda| t+1} \,,\nonumber \\
&=& \textbf{A}^t_{0}  \w_0 + \alpha \sum_{i=0}^{t-1}  \textbf{A}^t_{i+1}  \x_{i} G_{i}^{\lambda | t+1} + \alpha \x_{t} G_{t}^{\lambda|t+1} \,, \nonumber \\
&=& \textbf{A}^t_{0}  \w_0 + \alpha \sum_{i=0}^{t-1}  \textbf{A}^t_{i+1}  \x_{i} G_{i}^{\lambda | t}  + \alpha \sum_{i=0}^{t-1}  \textbf{A}^t_{i+1}  \x_{i} \big(G_{i}^{\lambda| t+1} - G_{i}^{\lambda| t}\big)
+ \alpha \x_{t} G_{t}^{\lambda | t+1} \,,\nonumber \\
&=& (\textbf{I} - \alpha \x_t \x_t^\tr)  \Big(\textbf{A}^{t-1}_{0} \, \w_0 + \alpha \sum_{i=0}^{t-1}  \textbf{A}^{t-1}_{i+1} \, \x_{i} G_{i}^{\lambda|t}\Big) \nonumber \\
&&  + \,\alpha \sum_{i=0}^{t-1}  \textbf{A}^t_{i+1} \x_{i} \,\Big(G_{i}^{\lambda | t+1} - G_{i}^{\lambda | t}\Big) + \alpha \x_{t} G_{t}^{\lambda | t+1} \,,\nonumber \\
&=& (\textbf{I} - \alpha \x_t \x_t^\tr)  \,\w_t  + \alpha \sum_{i=0}^{t-1}  \textbf{A}^t_{i+1}  \x_{i} \Big(G_{i}^{\lambda | t+1} - G_{i}^{\lambda | t}\Big) + \alpha \x_{t} G_{t}^{\lambda | t+1} \,, \nonumber \\
&=& (\textbf{I} - \alpha \x_t \x_t^\tr)  \w_t +  \alpha  \sum_{i=0}^{t-1}  \textbf{A}^t_{i+1}  \x_{i} (\gamma\lambda)^{t-i} \delta'_t  + \alpha \x_{t} \big( R_{t+1} + \gamma {\w_t}^\tr \x_{t+1}\big)\,,\nonumber \\
&=& \w_t +  \alpha  \sum_{i=0}^{t-1}  \textbf{A}^t_{i+1} \x_{i} (\gamma\lambda)^{t-i} \delta'_t + 
\alpha \x_{t} \big( R_{t+1} + \gamma {\w_t}^\tr \x_{t+1} - \w_t^\tr \x_t\big) \,,\nonumber \\
&=& \w_t +  \alpha  \sum_{i=0}^{t-1}  \textbf{A}^t_{i+1} \x_{i} (\gamma\lambda)^{t-i} \delta'_t \nonumber \\
&& +\, \alpha \x_{t} \big( R_{t+1} + \gamma {\w_t}^\tr \x_{t+1} - \w_{t-1}^\tr \x_t + \w_{t-1}^\tr \x_t - \w_t^\tr \x_t\big) \,,\nonumber \\
&=& \w_t +  \alpha  \sum_{i=0}^{t-1}  \textbf{A}^t_{i+1}  \x_{i} (\gamma\lambda)^{t-i} \delta'_t + \alpha \x_{t} \delta'_t - \alpha \big(\w_{t}^\tr \x_t - \w_{t-1}^\tr \x_t \big)\x_{t} \,, \nonumber \\
&=& \w_t +  \alpha  \sum_{i=0}^{t}  \textbf{A}^t_{i+1}  \x_{i} (\gamma\lambda)^{t-i} \delta'_t - \alpha  \big(\w_{t}^\tr \x_t - \w_{t-1}^\tr \x_t \big)\x_{t} \,,\nonumber \\
&=& \w_{t} + \alpha {\bs e}_t \delta'_t - \alpha  \big(\w_{t}^\tr \x_t - \w_{t-1}^\tr \x_t \big)\x_{t}\,, \qquad\qquad\mbox{ with }{\bs e}_t := \sum_{i=0}^{t}  
\textbf{A}^t_{i+1}  \x_{i}  (\gamma\lambda)^{t-i} \,,\nonumber \\
&=& \w_{t} + \alpha {\bs e}_t \big( \delta_t + \w_{t}^\tr \x_{t} - \w_{t-1}^\tr \x_{t} \big) - \alpha \big(\w_{t}^\tr \x_t - \w_{t-1}^\tr \x_t \big) \x_{t} \,,\nonumber \\
&=& \w_{t} + \alpha {\bs e}_t \delta_t + \alpha \big(\w_{t}^\tr \x_t - \w_{t-1}^\tr \x_t \big) \big({\bs e}_t - \x_{t} \big) \,.\label{eq:TO_theta_update}
\end{eqnarray}
\vspace{3cm}

We now have the update rule for $\w_t$, in addition to an explicit definition of ${\bs e}_t$. Next, using this explicit definition, we derive an update rule to compute ${\bs e}_t$ from  ${\bs e}_{t-1}$:
\begin{eqnarray}
{\bs e}_t &=& \sum_{i=0}^{t}  \textbf{A}^t_{i+1}  \x_{i} (\gamma\lambda)^{t-i} \,,\nonumber \\
&=&  \sum_{i=0}^{t-1}  \textbf{A}^t_{i+1}  \x_{i} (\gamma\lambda)^{t-i} +  \x_{t} \,,\nonumber \\
&=& (\textbf{I} - \alpha \x_t \x_t^\tr)  \gamma\lambda  \sum_{i=0}^{t-1} \textbf{A}^{t-1}_{i+1} \, \x_{i} (\gamma\lambda)^{t-i-1} +  \x_{t} \,,\nonumber \\
&=& (\textbf{I} - \alpha \x_t \x_t^\tr)  \gamma\lambda {\bs e}_{t-1} +  \x_{t} \,,\nonumber \\
&=& \gamma\lambda {\bs e}_{t-1} +  \x_{t} - \alpha \gamma\lambda ({\bs e}_{t-1}^\tr \x_t) \x_t  \,.\label{eq:TO_e_update}
\end{eqnarray}
Equations (\ref{eq:TO_theta_update}) and (\ref{eq:TO_e_update}), together with the definition of $\delta_t$, form the true online TD($\lambda$) update equations.

\newpage
\section{Detailed Results Random MRPs}
\label{sec:detailed mrps}

\begin{figure}[h]
\centering
\includegraphics[width=\columnwidth]{./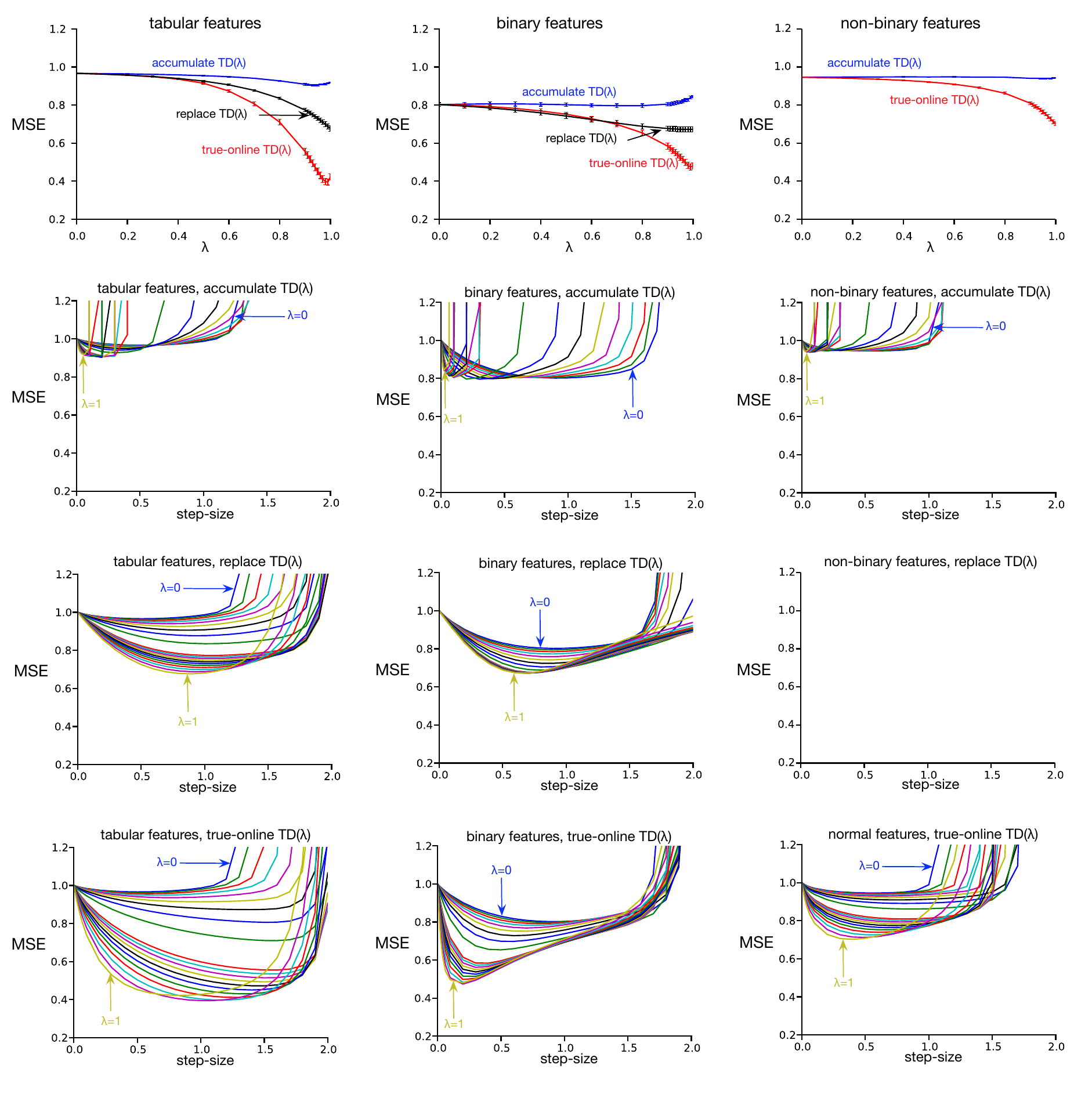}
\caption{Results on a random MRP with $k = 10$, $b = 3$ and $\sigma = 0.1$. MSE is the mean squared error averaged over the first 100 time steps, as well as 50 runs, and normalized using the initial error. The top graphs summarize the results from the graphs below it; they show the MSE error, for each $\lambda$, at the best step-size.}
\label{fig:randomMRP1}
\end{figure}

\newpage
\begin{figure}[h]
\centering
\includegraphics[width=\columnwidth]{./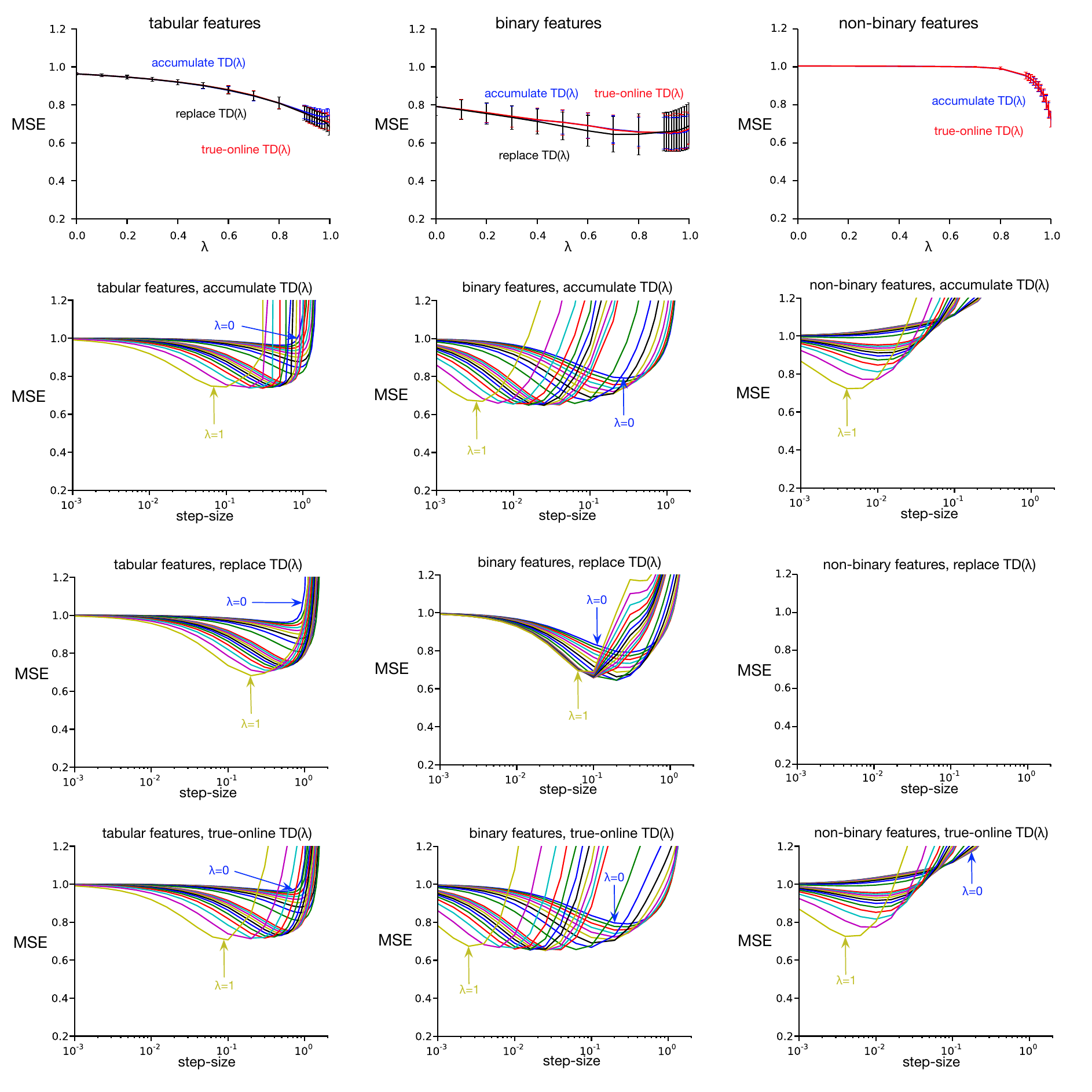}
\caption{Results on a random MRP with $k = 100$, $b = 10$ and $\sigma = 0.1$. MSE is the mean squared error averaged over the first 1000 time steps, as well as 50 runs, and normalized using the initial error.}
\label{fig:randomMRP2}
\end{figure}

\newpage
\begin{figure}[h]
\centering
\includegraphics[width=\columnwidth]{./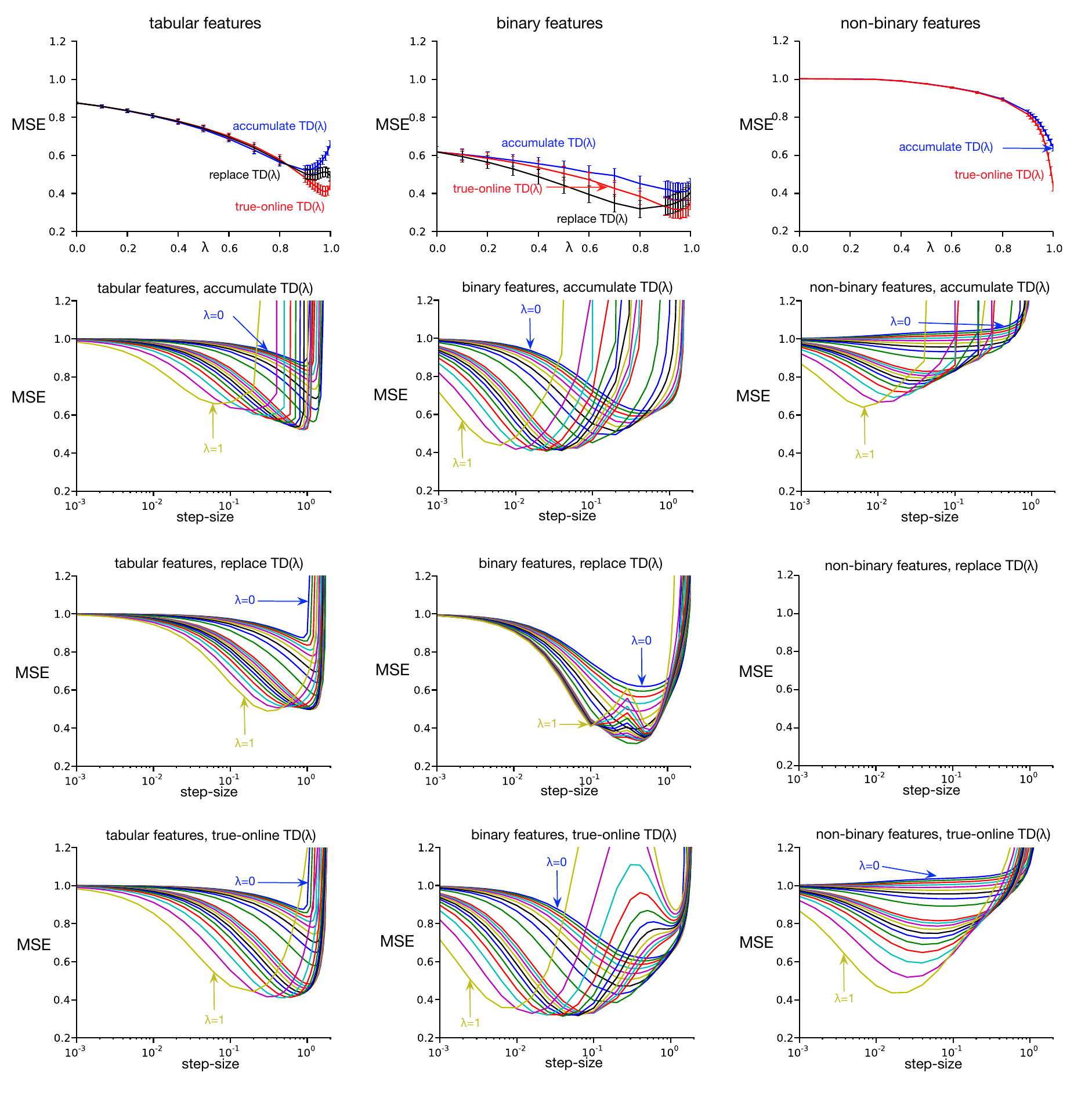}
\caption{Results on a random MRP with $k = 100$, $b = 3$ and $\sigma = 0$. MSE is the mean squared error averaged over the first 1000 time steps, as well as 50 runs, and normalized using the initial error.}
\label{fig:randomMRP3}
\end{figure}

\newpage
\section{Detailed Results for Myoelectric Prosthetic Arm}
\label{sec:detailed myo}

\begin{figure}[htbp]
\begin{center}
\includegraphics[width=4in]{./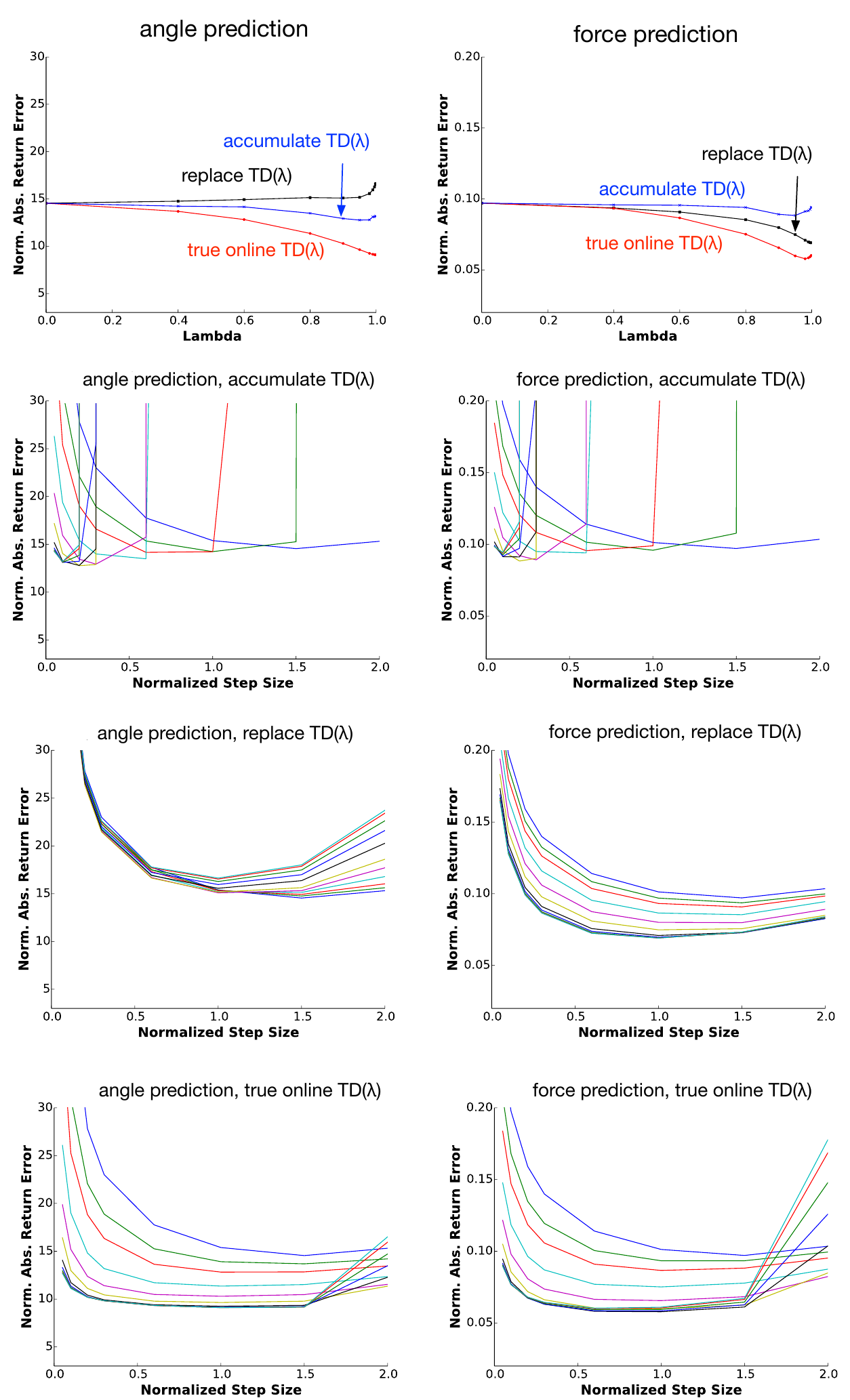}\\
\caption{Results on prosthetic data from the single amputee subject described in \cite{pilarski:ra13}, for the prediction of servo motor angle ({\em left column}) and grip force ({\em right column}) as recorded from the amputee's myoelectrically controlled robot arm during a grasping task.}
\label{fig:myoelectricresults_all}
\end{center}
\end{figure}

\newpage

\end{document}